\newtheorem{proposition}{Proposition}
\newtheorem{definition}{Definition}
\newcommand{\secref}[1]{Section~\ref{#1}}
\newcommand{\figref}[1]{Figure~\ref{#1}}
\newcommand{\tabref}[1]{Table~\ref{#1}}
\newcommand{\propref}[1]{Proposition~\ref{#1}}
\newcommand{\defref}[1]{Definition~\ref{#1}}
\newcommand{\secsref}[2]{Sections~\ref{#1} and~\ref{#2}}
\newcommand{\tabsref}[2]{Tables~\ref{#1} and~\ref{#2}}
\newcommand{\tabssref}[3]{Tables~\ref{#1},~\ref{#2}, and~\ref{#3}}
\newcommand{\bbR}{\mathbb{R}}
\newcommand{\bs}{\mathbf{s}}
\newcommand{\bx}{\mathbf{x}}
\newcommand{\bz}{\mathbf{z}}
\newcommand{\bK}{\mathbf{K}}
\newcommand{\bP}{\mathbf{P}}
\newcommand{\bX}{\mathbf{X}}
\newcommand{\bW}{\mathbf{W}}
\newcommand{\bQ}{\mathbf{Q}}
\newcommand{\bV}{\mathbf{V}}
\newcommand{\bZ}{\mathbf{Z}}
\newcommand{\calL}{\mathcal{L}}
\newcommand{\calO}{\mathcal{O}}
\newcommand{\swap}{\mathrm{swap}}
\newcommand{\softmin}{\overline{\mathrm{min}}}
\newcommand{\softmax}{\overline{\mathrm{max}}}
\newcommand{\ordered}{\mathrm{o}}
\newcommand{\soft}{\mathrm{soft}}
\newcommand{\hard}{\mathrm{hard}}
\newcommand{\accem}{\mathrm{acc}_{\mathrm{em}}}
\newcommand{\accew}{\mathrm{acc}_{\mathrm{ew}}}
\newcommand{\errorfree}{\mathrm{error}\textrm{-}\mathrm{free}}
\newcommand{\sg}{\mathrm{sg}}
\newcommand{\attention}{\mathrm{att}}
\newcommand{\sm}{\mathrm{softmax}}
\newcommand{\mh}{\mathrm{mha}}
\newcommand{\head}{\mathrm{head}}
\newcommand{\gt}{\mathrm{gt}}
\newcommand{\argsort}{\mathrm{argsort}}
\title{Generalized Neural Sorting Networks with\\Error-Free Differentiable Swap Functions}
\author{Jungtaek Kim$^1$ \quad Jeongbeen Yoon$^2$ \quad Minsu Cho$^2$\\
$^1$University of Pittsburgh \quad $^2$POSTECH\\
\texttt{jungtaek.kim@pitt.edu \quad \{jeongbeen,mscho\}@postech.ac.kr}
}
\begin{document}

\maketitle

\begin{abstract}
Sorting is a fundamental operation of all computer systems, having been a long-standing significant research topic. Beyond the problem formulation of traditional sorting algorithms, we consider sorting problems for more abstract yet expressive inputs, e.g., multi-digit images and image fragments, through a neural sorting network. To learn a mapping from a high-dimensional input to an ordinal variable, the differentiability of sorting networks needs to be guaranteed. In this paper we define a softening error by a differentiable swap function, and develop an error-free swap function that holds a non-decreasing condition and differentiability. Furthermore, a permutation-equivariant Transformer network with multi-head attention is adopted to capture dependency between given inputs and also leverage its model capacity with self-attention. Experiments on diverse sorting benchmarks show that our methods perform better than or comparable to baseline methods.
\end{abstract}

\section{Introduction}
\label{sec:introduction}

Traditional sorting algorithms~\citep{CormenTH2022book}, e.g., bubble sort, insertion sort, and quick sort, are a well-established approach to arranging given instances in computer science.
Since such a sorting algorithm is a basic component to build diverse computer systems,
it has been a long-standing significant research area in science and engineering.
Moreover, sorting networks~\citep{KnuthDE1998book,AjtaiM1983stoc}, which are structurally designed as an abstract device with a fixed number of wires, have been widely used to perform a sorting algorithm on computing hardware,
where each wire corresponds to a connection for a single swap operation.

Given an unordered sequence of $n$ elements $\bs = [s_1, \ldots, s_n] \in \bbR^n$,
the problem of sorting is defined to find a permutation matrix $\bP \in \{0, 1\}^{n \times n}$ that transforms $\bs$ into an ordered sequence $\bs_{\ordered}$:
\begin{equation}
    \bs_{\ordered} = \bP^\top \bs, 
    \label{eqn:sorting_bs}
\end{equation}
where a sorting algorithm is a function $f$ of $\bs$ that predicts a permutation matrix $\bP$:
\begin{equation}
    \bP = f(\bs).
\end{equation}
We generalize the formulation of traditional sorting problems to handle more diverse and expressive types of inputs, e.g., multi-digit images and image fragments, which can contain ordinal information semantically. To this end, we extend the sequence of scalars $\bs$ to the sequence of vectors $\bX = [\bx_1, \ldots, \bx_n]^\top \in \bbR^{n \times d}$, where $d \gg 1$ is an input dimensionality, and consider the following:
\begin{equation}
    \bX_{\ordered} = \bP^\top \bX,
    \label{eqn:sorting_bX}
\end{equation}
where $\bX_{\ordered}$ and $\bX$ are ordered and unordered inputs, respectively.
This generalized sorting problem can be reduced to \eqref{eqn:sorting_bs} if we are given a proper mapping $g$ from an input $\bx \in \bbR^d$ to an ordinal value $s \in \bbR$. Without such a mapping $g$, predicting $\bP$ in \eqref{eqn:sorting_bX} remains more challenging than in \eqref{eqn:sorting_bs} because $\bx$ is often a highly implicative high-dimensional input.
We address this generalized sorting problem by learning a neural sorting network together with a mapping $g$ in an end-to-end manner, given training data $\{ (\bX^{(i)}, \bP_{\gt}^{(i)}) \}_{i = 1}^N$.
The main challenge is to make the whole network $f([g(\bx_1), \ldots, g(\bx_n)])$ with mapping and sorting components differentiable in order to effectively train the network with a gradient-based learning scheme, which is not the case in general.
To tackle the differentiability issue for such a composite function, there has been recent research~\citep{GroverA2019iclr,CuturiM2019neurips,BlondelM2020icml,PetersenF2021icml,PetersenF2022iclr}.

In this paper, following a sorting network-based sorting algorithm with differentiable swap functions~(DSFs)~\citep{PetersenF2021icml,PetersenF2022iclr},
we first define a softening error by a sorting network, which indicates a difference between original and smoothed elements.
Then, we propose an error-free DSF that resolves an error accumulation problem induced by a soft DSF; this allows us to guarantee a zero error
in mapping $\bX$ to proper ordinal values.
Based on this, we develop the sorting network with error-free DSFs where we adopt a permutation-equivariant Transformer architecture with multi-head attention~\citep{VaswaniA2017neurips} to capture dependency between high-dimensional inputs
and also leverage the model capacity of the neural network with a self-attention scheme.

Our contributions can be summarized as follows:
(i) We define a softening error that measures a difference between original and smoothed values;
(ii) We propose an error-free DSF that resolves the error accumulation problem of conventional DSFs and is still differentiable;
(iii) We adopt a permutation-equivariant network with multi-head attention as a mapping from inputs to ordinal variables $g(\bX)$, unlike $g(\bx)$;
(iv) We demonstrate that our proposed methods are effective in diverse sorting benchmarks, compared to existing baseline methods.

\section{Sorting Networks with Differentiable Swap Functions}
\label{sec:background}

Following traditional sorting algorithms such as bubble sort, quick sort, and merge sort~\citep{CormenTH2022book} 
and sorting networks that are constructed by a fixed number of wires~\citep{KnuthDE1998book,AjtaiM1983stoc},
a swap function is a key ingredient of sorting algorithms and sorting networks:
\begin{equation}
    (x', y') = \swap(x, y),
\end{equation}
where $x' = \min(x, y)$ and $y' = \max(x, y)$,
which makes the order of $x$ and $y$ correct.
For example, if $x > y$, then $x' = y$ and $y' = x$.
Without loss of generality,
we can express $\min(\cdot, \cdot)$ and $\max(\cdot, \cdot)$ with the following equations:
\begin{equation}
    \min(x, y) = x \lfloor\sigma(y - x)\rceil + y \lfloor\sigma(x - y)\rceil
    \quad \textrm{and} \quad
    \max(x, y) = x \lfloor\sigma(x - y)\rceil + y \lfloor\sigma(y - x)\rceil,\label{eqn:min_max_hard}
\end{equation}
where $\lfloor \cdot \rceil$ rounds to the nearest integer and $\sigma(\cdot) \in [0, 1]$ transforms an input to a bounded value, i.e., a probability over inputs.
Computing \eqref{eqn:min_max_hard} is straightforward, but they are not differentiable.
To enable us to differentiate a swap function,
the soft versions of $\min$ and $\max$ can be defined:
\begin{align}
    \softmin(x, y) = x \sigma(y - x) + y \sigma(x - y)
    \quad \textrm{and} \quad
    \softmax(x, y) = x \sigma(x - y) + y \sigma(y - x),\label{eqn:min_max_soft}
\end{align}
where $\sigma(\cdot)$ is differentiable.
In addition to its differentiability,
either \eqref{eqn:min_max_hard} or \eqref{eqn:min_max_soft}
can be achieved 
with a sigmoid function $\sigma(x)$, i.e., a $s$-shaped function,
which satisfies the following properties that (i) $\sigma(x)$ is non-decreasing, (ii) $\sigma(x) = 1$ if $x \to \infty$, (iii) $\sigma(x) = 0$ if $x \to -\infty$, (iv) $\sigma(0) = 0.5$,
and (v) $\sigma(x) = 1 - \sigma(-x)$.
Also, as discussed by~\citet{PetersenF2022iclr}, the choice of $\sigma$ affects the performance of neural network-based sorting network in theory as well as in practice.
For example, an optimal monotonic sigmoid function,
which is visualized in~\figref{fig:optimal_monotonic},
is defined as the following:
\begin{equation}
    \sigma_{\calO}(x) = \begin{cases}
        - \frac{1}{16}(\beta x)^{-1} & \textrm{if} \ \beta x < -0.25, \\
        1 - \frac{1}{16}(\beta x)^{-1} & \textrm{if} \ \beta x > 0.25, \\
        \beta x + 0.5 & \textrm{otherwise},
    \end{cases}
    \label{eqn:optimal}
\end{equation}
where $\beta$ is steepness;
see the work~\citep{PetersenF2022iclr} for the details of these numerical and theoretical analyses.
Here, we would like to emphasize that the important point of such monotonic sigmoid functions is strict monotonicity.
However, as will be discussed in~\secref{sec:analysis},
it induces an error accumulation problem, which can degrade the performance of the sorting network.

By either \eqref{eqn:min_max_hard} or \eqref{eqn:min_max_soft},
the permutation matrix $\bP$ (henceforth, denoted as $\bP_{\hard}$ and $\bP_{\soft}$ for \eqref{eqn:min_max_hard} and \eqref{eqn:min_max_soft}, respectively) is calculated by the following procedure of a sorting network:
(i) Building a pre-defined sorting network with a fixed number of wires -- a wire is a component for comparing and swapping two elements;
(ii) Feeding an unordered sequence $\bs$ into the pre-defined sorting network and calculating a wire-wise permutation matrix $\bP_i$ for each wire $i$ iteratively;
(iii) Calculating the permutation matrix $\bP$ by multiplying all wire-wise permutation matrices.

As shown in~\figref{fig:sorting_network}, a set of wires represents a set of swap operations that are operated simultaneously,
so that each set produces an intermediate permutation matrix $\bP_i$ at the $i$th step.
Consequently, $\bP^\top = \bP_1^\top \bP_2^\top \cdots \bP_k^\top = (\bP_k \cdots \bP_2 \bP_1)^\top$,
where $k$ is the number of wire sets.
For example, in~\figref{fig:sorting_network}, $k = 5$.

The doubly-stochastic matrix property of $\bP$ is shown by the following proposition:
\begin{proposition}[Modification of Lemma~3 in the work~\citep{PetersenF2022iclr}]
    \label{prop:doubly_stochastic}
    A permutation matrix $\bP \in \bbR^{n \times n}$ is doubly-stochastic, which implies that $\sum_{i = 1}^n[\bP]_{ij} = 1$ and $\sum_{j = 1}^n[\bP]_{ij} = 1$. In particular, regardless of the definition of a swap function with $\min$, $\max$, $\softmin$, and $\softmax$, hard and soft permutation matrices, i.e., $\bP_{\hard}$ and $\bP_{\soft}$, are doubly-stochastic.
\end{proposition}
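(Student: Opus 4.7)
The plan is to reduce the statement to two elementary facts: (a) every wire-wise matrix $\bP_i$ produced by one parallel layer of the sorting network is doubly-stochastic, and (b) products of doubly-stochastic matrices are doubly-stochastic. Combining these with the factorization $\bP^\top = \bP_1^\top \bP_2^\top \cdots \bP_k^\top$ gives the result for both $\bP_{\hard}$ and $\bP_{\soft}$.

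First I would inspect the $2 \times 2$ block that a single wire contributes. Reading off \eqref{eqn:min_max_hard} and \eqref{eqn:min_max_soft}, the swap acts on the input coordinates $(x,y)$ via
\begin{equation*}
\begin{pmatrix} x' \\ y' \end{pmatrix}
=
\begin{pmatrix} a(y-x) & a(x-y) \\ a(x-y) & a(y-x) \end{pmatrix}
\begin{pmatrix} x \\ y \end{pmatrix},
\end{equation*}
where $a = \sigma$ in the soft case and $a = \lfloor \sigma \cdot \rceil$ in the hard case. Property (v), $\sigma(x) = 1 - \sigma(-x)$, immediately yields $a(y-x) + a(x-y) = 1$ in the soft case; in the hard case the same identity holds whenever $x \neq y$ because the two sigmoid values lie on opposite sides of $0.5$ and round to complementary integers (ties at $x=y$ being resolved by any fixed convention). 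Hence every such $2 \times 2$ block has all row sums and column sums equal to $1$, i.e.\ is doubly-stochastic.

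Next, $\bP_i$ is assembled from a disjoint collection of such $2 \times 2$ swap blocks placed on the index pairs operated on by layer $i$, together with $1$'s on the diagonal for indices that are not touched by any wire at that layer. Up to simultaneous row and column permutation this is block-diagonal with doubly-stochastic blocks, so $\bP_i$ itself is doubly-stochastic. Then for any two doubly-stochastic matrices $A, B \in \bbR^{n \times n}$, a one-line computation
\begin{equation*}
\sum_j (AB)_{ij} \;=\; \sum_k A_{ik} \sum_j B_{kj} \;=\; \sum_k A_{ik} \;=\; 1,
\end{equation*}
and symmetrically for column sums, shows that $AB$ is doubly-stochastic. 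Iterating this over $\bP_1, \ldots, \bP_k$ shows that $\bP = \bP_k \cdots \bP_1$ (and hence $\bP^\top$) is doubly-stochastic, which is the claim for both $\bP_{\hard}$ and $\bP_{\soft}$.

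The only real subtlety—hardly an obstacle—is bookkeeping for the hard case: the rounding $\lfloor \sigma(\cdot) \rceil$ is two-valued only when its argument is nonzero, so one needs a fixed tiebreaking rule at $x=y$ (e.g.\ declaring $\lfloor 0.5 \rceil = 1$ and the companion value $0$) to ensure that the block stays a genuine $0/1$ permutation with row- and column-sums equal to one. Everything else is the routine algebra sketched above, so no deeper argument is required beyond the sigmoid symmetry (v) and closure of doubly-stochastic matrices under multiplication.
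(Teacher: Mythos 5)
Your proposal is correct and follows essentially the same route as the paper's proof: each wire contributes a matrix whose swap block has entries $p$ and $1-p$ (summing to one by the sigmoid symmetry $\sigma(x) = 1 - \sigma(-x)$, or by rounding in the hard case) with identity elsewhere, and the product of doubly-stochastic matrices remains doubly-stochastic. Your explicit remark about tiebreaking at $x = y$ in the hard case is a small point of extra care that the paper's proof glosses over by simply writing the off-diagonal entries as $1 - p$.
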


\begin{proof}
    The proof of~\propref{prop:doubly_stochastic} is provided in~\secref{sec:proof_prop_1}.
\end{proof}

In~\secsref{sec:analysis}{sec:method}, we present an error-free DSF and a neural sorting network with error-free DSFs.

\begin{figure}[t]
    \centering
    \begin{minipage}{0.44\textwidth}
    \centering
    \includegraphics[width=0.99\textwidth]{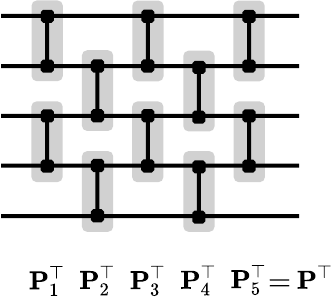}
    \caption{A sorting network with 5 wire sets and their permutation matrices.}
    \label{fig:sorting_network}
    \end{minipage}
    \ \
    \begin{minipage}{0.54\textwidth}
    \centering
    \includegraphics[width=0.99\textwidth]{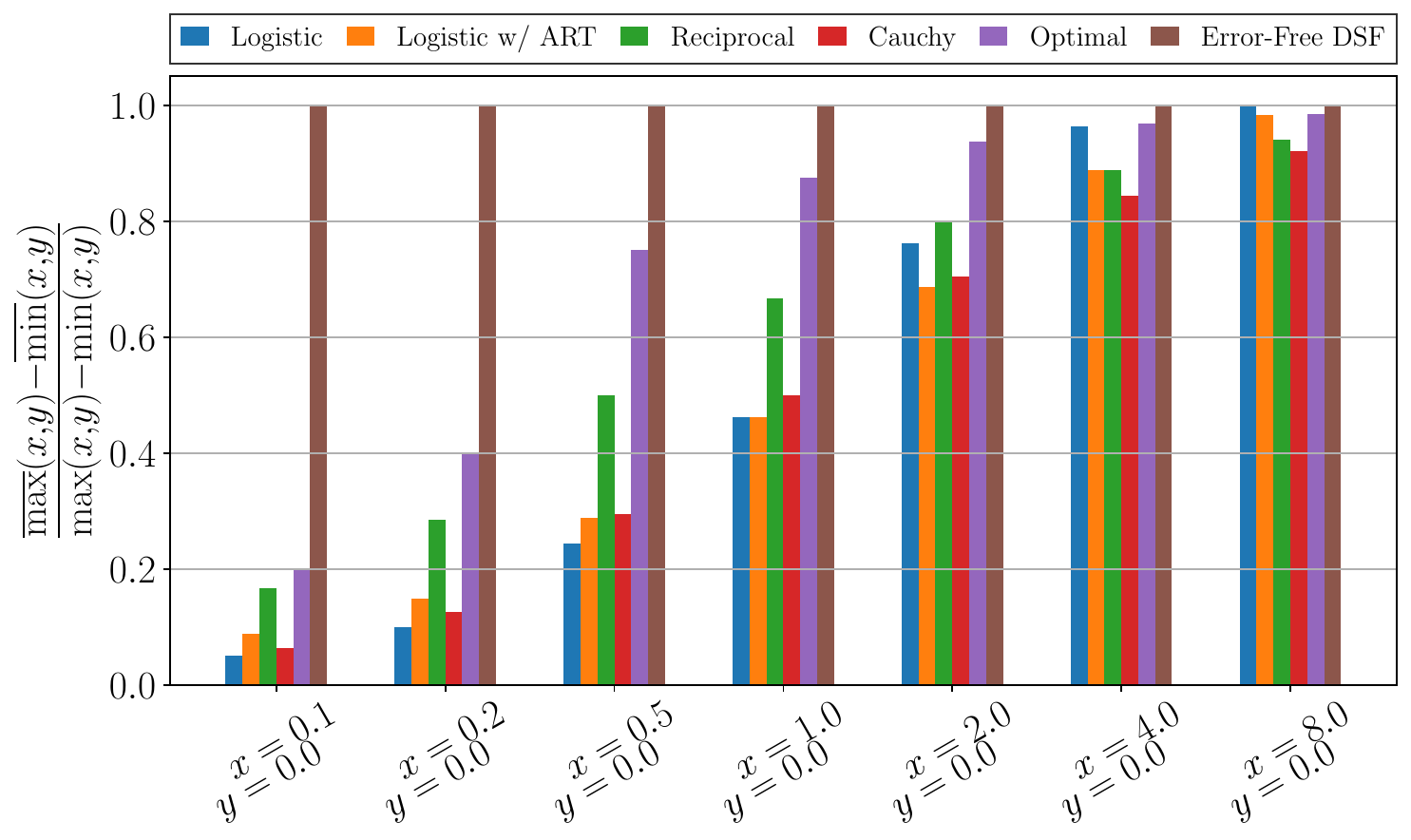}
    \caption{Comparisons of diverse DSFs where a swap function is applied once. After a single operation, two input values $x$ and $y$ are softened while our error-free DSF does not change two values. If $|x - y|$ is small, softening will be more significant.}
    \label{fig:swaps_once}
    \end{minipage}
\end{figure}

\section{Error-Free Differentiable Swap Functions}
\label{sec:analysis}

Before introducing our error-free DSF,
we start by describing the motivation of the error-free DSF.

Due to the nature of $\softmin$ and $\softmax$,
which is described in~\eqref{eqn:min_max_soft},
the monotonic DSF changes original input values.
For example, if $x < y$,
then $x < \softmin(x, y)$ and $\softmax(x ,y) < y$ after applying the swap function.
It can be a serious problem because changes by the DSF are accumulated
as the DSF applies iteratively, called an error accumulation problem in this paper.
The results of sigmoid functions such as the logistic, logistic with ART, reciprocal, Cauchy, and optimal monotonic functions, and also our error-free DSF are presented in~\figref{fig:swaps_once}, where a swap function is applied once;
see the work~\citep{PetersenF2022iclr} for the respective sigmoid functions.
All DSFs except for our error-free DSF change two values,
so that they can make two values not distinguishable.
In particular, if a difference between two values is small, the consequence of softening is more significant than a case with a large difference.
Moreover, if we apply a swap function repeatedly,
they eventually become identical;
see~\figref{fig:swaps} in~\secref{sec:comparisons_dsf}.
While a swap function is not applied as many as it is tested in the synthetic example shown in~\figref{fig:swaps},
it can still cause the error accumulation problem with a few operations.
Here we formally define a softening error, which has been mentioned in this paragraph:
\begin{definition}
    \label{def:error}
    Suppose that we are given $x$ and $y$ where $x < y$.
    By \eqref{eqn:min_max_soft}, these values $x$ and $y$ are softened by a monotonic DSF and they satisfy the following inequalities:
    \begin{equation}
        x < x' = \softmin(x, y) \leq y' = \softmax(x, y) < y.
    \end{equation}
    Therefore, we define a difference between the original and softened values, $x' - x$ or $y - y'$:
    \begin{equation}
        y - y' = y - \softmax(x, y) = x' - x = \softmin(x, y) - x > 0,
        \label{eqn:error}
    \end{equation}
    which is called a softening error in this paper.
    Without loss of generality, the softening error is $\softmin(x, y) - \min(x, y)$ or $\max(x, y) - \softmax(x, y)$ for any $x, y$.
\end{definition}

Note that~\eqref{eqn:error} is satisfied by $y - \softmax(x, y) = y(1 - \sigma(y - x)) - x\sigma(x - y) = y\sigma(x - y) - x(1 - \sigma(y - x)) = \softmin(x, y) - x$, using \eqref{eqn:min_max_soft} and $\sigma(x - y) = 1 - \sigma(y - x)$.

With~\defref{def:error}, we are able to specify the seriousness of the error accumulation problem:
\begin{proposition}
    \label{prop:error_accumulation}
    Suppose that $x$ and $y$ are given and a DSF is applied $k$ times.
    Assuming an extreme scenario that $k \to \infty$, error accumulation becomes $(\max(x, y) - \min(x, y))/2$,
    under the assumption that $\nabla_x \sigma(x) > 0$.
\end{proposition}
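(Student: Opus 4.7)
The plan is to analyze the iterated dynamics of the DSF applied to the pair $(x, y)$ and track the two values jointly. Without loss of generality assume $x < y$, and define the recursion $x_0 = x$, $y_0 = y$, and for $k \geq 0$,
\begin{equation*}
    x_{k+1} = \softmin(x_k, y_k), \quad y_{k+1} = \softmax(x_k, y_k).
\end{equation*}
The accumulated softening errors after $k$ iterations are $x_k - x$ and $y - y_k$, both nonnegative by \defref{def:error}. The goal is to show that both quantities converge to $(y-x)/2$, equivalently that $x_k, y_k \to (x+y)/2$.

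First I would show that the arithmetic mean is an invariant of the recursion. Summing the two updates and using the sigmoid identity $\sigma(z) + \sigma(-z) = 1$ (property (v)) gives $x_{k+1} + y_{k+1} = x_k + y_k$, so by induction $x_k + y_k = x + y$ for every $k$. Subtracting the two updates gives
\begin{equation*}
    d_{k+1} := y_{k+1} - x_{k+1} = (y_k - x_k)\bigl(\sigma(y_k - x_k) - \sigma(x_k - y_k)\bigr) = d_k\bigl(2\sigma(d_k) - 1\bigr),
\end{equation*}
again using $\sigma(-z) = 1 - \sigma(z)$. Strict monotonicity of $\sigma$ together with $\sigma(0) = 0.5$ yields $\sigma(d_k) > 0.5$ whenever $d_k > 0$, so the factor $2\sigma(d_k) - 1$ is strictly positive, and by induction $d_k > 0$ for all $k$.

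The main obstacle is to upgrade this pointwise contraction into a genuine convergence $d_k \to 0$; for this I would invoke the boundary behaviour $\sigma(z) \to 1$ only as $z \to \infty$ (property (ii)), which gives $\sigma(d_k) < 1$ at every finite $d_k$ and hence $d_{k+1} < d_k$. Thus $(d_k)$ is positive and strictly decreasing, so it converges to some $d_\infty \geq 0$. Suppose $d_\infty > 0$; continuity of $\sigma$ then gives $d_{k+1}/d_k \to 2\sigma(d_\infty) - 1 < 1$, forcing $d_k$ to decay geometrically to $0$ and contradicting $d_\infty > 0$. Hence $d_\infty = 0$.

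Combining the invariant sum with $d_k \to 0$ yields $x_k, y_k \to (x+y)/2$, and therefore
\begin{equation*}
    \lim_{k \to \infty} (y - y_k) = \lim_{k \to \infty} (x_k - x) = \frac{y - x}{2} = \frac{\max(x, y) - \min(x, y)}{2},
\end{equation*}
as required. The case $x > y$ follows by symmetry, and the case $x = y$ is trivial since the swap leaves both values fixed.
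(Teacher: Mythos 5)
Your proof is correct, and it is in fact more complete than the paper's own argument. The paper's proof only observes that $\softmin^k(x,y)$ is increasing and $\softmax^k(x,y)$ is decreasing and bounded, then \emph{asserts} that the two limits coincide as $k \to \infty$, and finally uses the per-step symmetry of the softening error in \eqref{eqn:error} to place the common limit at the midpoint $(\min(x,y)+\max(x,y))/2$. You take a different route: the conserved sum $x_k + y_k = x + y$ plays the role of \eqref{eqn:error} (equal error on both sides at each step), while the explicit gap recursion $d_{k+1} = d_k\left(2\sigma(d_k)-1\right)$ is what lets you actually prove the step the paper leaves unjustified, namely that the two sequences meet: since $0 < 2\sigma(d)-1 < 1$ for every finite $d > 0$ (strict monotonicity from $\nabla_x\sigma(x)>0$, $\sigma(0)=0.5$, and $\sigma<1$ at finite arguments -- note the last point needs strict monotonicity together with the range $[0,1]$, not property (ii) alone), the gap is positive and strictly decreasing, and a positive limit is impossible. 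You could even shorten your ratio/geometric-decay argument by passing to the limit in the fixed-point equation $d_\infty = d_\infty\left(2\sigma(d_\infty)-1\right)$, which would force $\sigma(d_\infty)=1$, a contradiction. In short, your approach buys rigor on the one genuinely non-trivial analytic claim, at the cost of a slightly longer argument; the paper's version is shorter but treats that claim as self-evident.
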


\begin{proof}
    The proof of this proposition can be found in~\secref{sec:proof_prop_2}.
\end{proof}

As mentioned in the proof of~\propref{prop:error_accumulation} and empirically shown in~\figref{fig:swaps_once},
a swap function with relatively large $\nabla_x \sigma(x)$
changes the original values $x, y$ significantly
compared to a swap function with relatively small $\nabla_x \sigma(x)$
-- they tend to become identical with the small number of operations in the case of large $\nabla_x \sigma(x)$.

In addition to the error accumulation problem,
such a DSF depends on the scale of $|x - y|$
as shown in~\figref{fig:swaps_once}.
If $x < y$ but $x$ and $y$ are close enough, $\sigma(y - x)$ is between $0.5$ and $1$,
which implies that the error can be induced by the scale of $|x - y|$ as well.

To tackle the aforementioned problem of error accumulation,
we propose an error-free DSF:
\begin{equation}
    (x', y') = \swap_{\errorfree}(x, y),
\end{equation}
where
\begin{equation}
    x' = \left(\min(x, y) - \softmin(x, y)\right)_{\sg} + \softmin(x, y)
    \quad \textrm{and} \quad
    y' = \left(\max(x, y) - \softmax(x, y)\right)_{\sg} + \softmax(x, y).\label{eqn:swap_zero_min_max}
\end{equation}
Note that $\sg$ indicates that gradients are stopped amid backward propagation,
inspired by a straight-through estimator~\citep{BengioY2013arxiv}.
At a step for forward propagation,
the error-free DSF produces $x' = \min(x, y)$ and $y' = \max(x, y)$.
On the contrary, at a step for backward propagation,
the gradients of $\softmin$ and $\softmax$ are used to update learnable parameters.
Consequently, our error-free DSF does not smooth the original elements as shown in~\figref{fig:swaps_once}
and our DSF shows 100\% accuracy for $\accem$ and $\accew$ (see \secref{sec:experiments} for their definitions)
as shown in~\figref{fig:zero_error}.
Compared to our DSF, the existing DSFs do not correspond the original elements to the elements that have been compared and fail to achieve reasonable performance as a sequence length increases,
in the cases of \figref{fig:zero_error}.

By \eqref{eqn:min_max_hard}, \eqref{eqn:min_max_soft}, and \eqref{eqn:swap_zero_min_max},
we obtain the following:
\begin{align}
    x' &\!=\! \left((x \lfloor\sigma(y-x)\rceil \!+\! y \lfloor\sigma(x-y)\rceil) \!-\! (x \sigma(y-x) \!+\! y \sigma(x-y))\right)_{\sg} \!+\! (x \sigma(y-x) \!+\! y \sigma(x-y)) \nonumber\\
    &\!=\! x \left( (\lfloor\sigma(y-x)\rceil \!-\! \sigma(y-x) )_{\sg} \!+\! \sigma(y-x) \right) \!+\! y \left( (\lfloor\sigma(x-y)\rceil \!-\! \sigma(x-y) )_{\sg} \!+\! \sigma(x-y) \right),\\
    y' &\!=\! x \left( (\lfloor\sigma(x-y)\rceil \!-\! \sigma(x-y) )_{\sg} \!+\! \sigma(x-y) \right) \!+\! y \left( (\lfloor\sigma(y-x)\rceil \!-\! \sigma(y-x) )_{\sg} \!+\! \sigma(y-x) \right),
\end{align}
which can be used to define a permutation matrix with the error-free DSF.
For example, if $n = 2$, a permutation matrix $\bP$ over $[x, y]$ is
\begin{equation}
    \bP = \left[\begin{matrix}
        (\lfloor\sigma(y-x)\rceil - \sigma(y-x))_{\sg} + \sigma(y-x)
        & (\lfloor\sigma(x-y)\rceil - \sigma(x-y))_{\sg} + \sigma(x-y) \\
        (\lfloor\sigma(x-y)\rceil - \sigma(x-y))_{\sg} + \sigma(x-y)
        & (\lfloor\sigma(y-x)\rceil - \sigma(y-x))_{\sg} + \sigma(y-x) \\
    \end{matrix}\right].
    \label{eqn:perm_zero_error}
\end{equation}

To sum up, we can describe the following proposition on our error-free DSF, $\swap_{\errorfree}(\cdot, \cdot)$:
\begin{proposition}
    \label{prop:zero_error}
    By \eqref{eqn:swap_zero_min_max},
    the softening error $x' - \min(x, y)$ or $\max(x, y) - y'$ for an error-free DSF is zero.
\end{proposition}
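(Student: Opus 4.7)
The plan is essentially to unwind the definition in \eqref{eqn:swap_zero_min_max} and use the fact that the stop-gradient operator $(\cdot)_\sg$ acts as the identity during forward propagation. The softening error from \defref{def:error} is defined purely in terms of values (not gradients), so only the forward-pass behavior of $\swap_\errorfree$ matters for establishing this proposition.

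Concretely, I would proceed as follows. First, recall that $(z)_\sg$ is by construction value-identical to $z$ on the forward pass, and only redefines the Jacobian (setting $\nabla (z)_\sg = 0$) on the backward pass; this is the standard straight-through convention cited from \citet{BengioY2013arxiv}. Second, substitute this into the two equations of \eqref{eqn:swap_zero_min_max}: for the forward pass,
\begin{equation}
    x' = \bigl(\min(x,y) - \softmin(x,y)\bigr) + \softmin(x,y) = \min(x,y),
\end{equation}
and symmetrically $y' = \max(x,y)$. Third, substitute these forward values directly into the softening error expression of \defref{def:error}: $x' - \min(x,y) = 0$ and $\max(x,y) - y' = 0$, which is exactly the conclusion.

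There is essentially no obstacle here beyond being careful about what $(\cdot)_\sg$ means. The substance of the construction is not in the proof but in the definition itself: the straight-through trick was chosen precisely so that $\softmin$ and $\softmax$ survive only in the gradient, while the forward values match the exact $\min$ and $\max$. One may want to briefly note that differentiability is still maintained, since the gradient of $x'$ with respect to $(x,y)$ through the backward path reduces to the gradient of $\softmin(x,y)$, which is well-defined by the differentiability of $\sigma$ assumed throughout Section~\ref{sec:background}; this is, however, a remark rather than part of the proof that the softening error vanishes.
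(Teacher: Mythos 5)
Your proof is correct and follows essentially the same route as the paper's: substitute \eqref{eqn:swap_zero_min_max} into the softening error, use the fact that the stop-gradient operator is the identity on the forward pass, and observe that the $\softmin$ (resp. $\softmax$) terms cancel, giving zero. The paper likewise carries out this cancellation explicitly for $x'$ and declares the $y'$ case obvious, so nothing in your argument differs in substance.
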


\begin{proof}
    The proof of this proposition is presented in~\secref{sec:proof_prop_3}.
\end{proof}

\begin{figure}[t]
    \centering
    \includegraphics[width=0.99\textwidth]{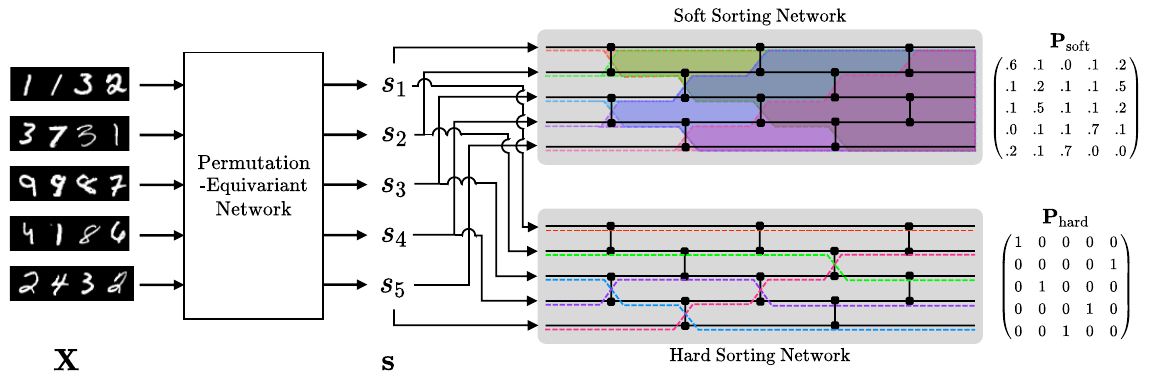}
    \caption{Illustration of our neural sorting network with error-free DSFs. Given high-dimensional inputs $\bX$, a permutation-equivariant network produces a vector of ordinal variables $\bs$, which is used to be swapped using a soft or hard sorting network.}
    \label{fig:architecture}
\end{figure}

\section{Neural Sorting Networks with Error-Free Differentiable Swap Functions}
\label{sec:method}

In this section we propose a generalized neural network-based sorting network with an error-free DSF and a permutation-equivariant neural network,
considering the properties covered in~\secref{sec:analysis}.

First, we describe a procedure for transforming high-dimensional inputs to ordinal scores.
Such a mapping $g: \bbR^d \to \bbR$, which consists of a set of learnable parameters,
has to satisfy a permutation-equivariant property:
\begin{equation}
    [g(\bx_{\pi_1}), g(\bx_{\pi_2}), \ldots, g(\bx_{\pi_n})] = \pi([g(\bx_1), g(\bx_2), \ldots, g(\bx_n)]),
    \label{eqn:pe}
\end{equation}
where $\pi_i = [\pi([1, 2, \ldots, n])]_i \ \forall i \in [n]$,
for any permutation function $\pi$.
Typically, an instance-wise neural network, which is applied to each element in a sequence given,
is permutation-equivariant~\citep{ZaheerM2017neurips}.
Based on this property, instance-wise CNNs are employed in differentiable sorting algorithms~\citep{GroverA2019iclr,CuturiM2019neurips,PetersenF2021icml,PetersenF2022iclr}.
However, such an instance-wise architecture is limited
since it is ineffective for capturing essential features from a sequence.
Some types of neural networks such as long short-term memory~\citep{HochreiterS97neco}
and the standard Transformer architecture~\citep{VaswaniA2017neurips} are capable of modeling a sequence of instances,
utilizing recurrent connections, scaled dot-product attention, or parameter sharing across elements.
While they are powerful for modeling a sequence,
they are not obviously permutation-equivariant.
Instead of such permutation-sensitive models,
we adopt a robust Transformer-based network that satisfies the permutation-equivariant property, which is inspired by the recent work~\citep{VaswaniA2017neurips,LeeJ2019icml}.

To explain our network, we briefly introduce scaled dot-product attention and multi-head attention:
\begin{equation}
    \attention(\bQ, \bK, \bV) = \sm \left(\frac{\bQ \bK^\top}{\sqrt{d_m}}\right) \bV
    \quad \textrm{and} \quad
    \mh(\bQ, \bK, \bV) = [\head_1, \head_2, \ldots, \head_h] \bW_o,
    \label{eqn:attentions}
\end{equation}
where $\head_i = \attention(\bQ\bW_q^{(i)}, \bK\bW_k^{(i)}, \bV\bW_v^{(i)})$, $\bQ, \bK, \bV \!\in\! \bbR^{n \times h d_m}$, $\bW_q^{(i)}, \bW_k^{(i)}, \bW_v^{(i)} \!\in\! \bbR^{h d_m \times d_m}$, and $\bW_o \!\in\! \bbR^{h d_m \times h d_m}$.
Similar to the Transformer network,
a series of $\mh$ blocks is stacked with layer normalization~\citep{BaJL2016arxiv} and residual connections~\citep{HeK2016cvpr},
and in this paper $\bX$ is processed by $\mh(\bZ, \bZ, \bZ)$
where $\bZ = g'(\bX)$ or $\bZ$ is the output of a previous layer; see~\secref{sec:details_of_architectures} for the details of the architectures.
Note that $g'(\cdot)$ is an instance-wise embedding layer, e.g., a simple fully-connected network or a simple CNN.
Importantly, compared to the standard Transformer model,
our network does not include a positional embedding,
in order to satisfy the permutation-equivariant property;
$\mh(\bZ, \bZ, \bZ)$ satisfies \eqref{eqn:pe} for the permutation of $\bz_1, \bz_2, \ldots, \bz_n$ where $\bZ = [\bz_1, \bz_2, \ldots, \bz_n]^\top$.
The output of our network is $\bs$, followed by the last instance-wise fully-connected layer.
Finally, as shown in~\figref{fig:architecture},
our sorting network is able to produce differentiable permutation matrices over $\bs$, i.e., $\bP_{\hard}$ and $\bP_{\soft}$,
by utilizing \eqref{eqn:swap_zero_min_max} and \eqref{eqn:min_max_soft},
respectively.
Note that $\bP_{\hard}$ and $\bP_{\soft}$ are doubly-stochastic by~\propref{prop:doubly_stochastic}.
In addition, the details of the permutation-equivariant network with multi-head attention are briefly visualized in~\figref{fig:perm_equiv_net}.

To learn the permutation-equivariant network $g$,
we define both objectives for $\bP_{\soft}$ and $\bP_{\hard}$:
\begin{align}
    \calL_{\soft} &= -\sum_{i = 1}^n \sum_{j = 1}^n [\bP_{\gt} \log \bP_{\soft} + (1 - \bP_{\gt}) \log (1 - \bP_{\soft})]_{ij},\label{eqn:soft_objective}\\
    \calL_{\hard} &= \|\bX_{\ordered, \hard} - \bX_{\ordered, \gt}\|_F^2 = \|\bP_{\hard}^\top \bX - \bP_{\gt}^\top \bX\|_F^2,
    \label{eqn:hard_objective}
\end{align}
where $\bP_{\gt}$ is a ground-truth permutation matrix.
Note that all the operations in $\calL_{\soft}$ are entry-wise.
Similar to~\eqref{eqn:soft_objective},
the objective~\eqref{eqn:hard_objective} for $\bP_{\hard}$ should be designed as the form of binary cross-entropy, which tends to be generally robust for training deep neural networks.
However, we struggle to apply the binary cross-entropy for $\bP_{\hard}$ into our problem formulation,
due to discretized loss values.
In particular, the form of cross-entropy for $\bP_{\hard}$
can be used to train the sorting network, but degrades its performance in our preliminary experiments.
Thus, we choose the objective for $\bP_{\hard}$ as $\|\bX_{\ordered, \hard} - \bX_{\ordered, \gt}\|_F^2$ with the Frobenius norm,
which helps to train the network more robustly.

In addition, using~a proposition on splitting $\bP_{\hard}$,
which is discussed in~\secref{sec:split_strategy_permutations},
the objective \eqref{eqn:hard_objective} for $\bP_{\hard}$ can be modified by splitting $\bP_{\hard}$, $\bP_{\gt}$, and $\bX$,
which is able to reduce the number of possible permutations;
see the associated section for details.
Eventually, our network $g$ is trained by the combined loss $\calL = \calL_{\soft} + \lambda \calL_{\hard}$,
where $\lambda$ is a balancing hyperparameter;
an analysis on $\lambda$ can be found in the appendices.
As mentioned above, a landscape of $\calL_{\hard}$ is not smooth
due to the property of a straight-through estimator,
even though we use $\calL_{\hard}$.
Thus, we combine both objectives to the form of a single loss,
which is widely adopted in the deep learning community.

\section{Experiments}
\label{sec:experiments}

We demonstrate experimental results to show the validity of our methods.
Our neural network-based sorting network aims to solve two benchmarks:
sorting (i) multi-digit images and (ii) image fragments.
Unless otherwise specified, an odd-even sorting network is used in the experiments.
We measure the performance of each method in $\accem$ and $\accew$:
\begin{equation}
    \accem = \frac{\sum_{i = 1}^N \bigcap_{j = 1}^n {\boldsymbol 1} \big( \left[ \hat{\bs}^{(i)} \right]_j = \left[ \tilde{\bs}^{(i)} \right]_j \big)}{N}
    \quad \textrm{and} \quad
    \accew = \frac{\sum_{i = 1}^N \sum_{j = 1}^n {\boldsymbol 1} \big( \left[ \hat{\bs}^{(i)} \right]_j = \left[ \tilde{\bs}^{(i)} \right]_j \big)}{Nn},
\end{equation}
where $\argsort$ returns indices to sort a given vector and $\boldsymbol 1(\cdot)$ is an indicator function. Note that
\begin{equation}
    \hat{\bs}^{(i)} = \argsort\left( \bP^{(i)\top}_{\gt} \bs^{(i)} \right)
    \quad \textrm{and} \quad
    \tilde{\bs}^{(i)} = \argsort\left( \bP^{(i)\top} \bs^{(i)} \right).
\end{equation}
We attempt to match the capacities of the Transformer-based models to the conventional CNNs.
As described in~\tabssref{tab:mnist}{tab:svhn}{tab:fragments},
the capacities of the Transformer-Small models are smaller than or similar to the capacities of the CNNs in terms of FLOPs and the number of parameters.

\subsection{Sorting Multi-Digit Images}
\label{sec:sorting_digit}

\begin{table}[t]
    \caption{Results on sorting the four-digit MNIST dataset. The results are measured in $\accem$ and $\accew$ (in parentheses). FLOPs is on the basis of a sequence length 3. All the values are averaged over 5 runs with different seeds.}
    \label{tab:mnist}
    \begin{center}
    \scriptsize
    \setlength{\tabcolsep}{2pt}
    \begin{tabular}{llccccccccc}
        \toprule
        \multirow{2}{*}{\textbf{Method}} && \multirow{2}{*}{\textbf{Model}} & \multicolumn{6}{c}{\textbf{Sequence Length}} & \multirow{2}{*}{\textbf{FLOPs}} & \multirow{2}{*}{\textbf{\#Param.}}\\
        &&& 3 & 5 & 7 & 9 & 15 & 32 \\
        \midrule
        \multicolumn{2}{l}{NeuralSort} & \multirow{8}{*}{CNN} & 91.9 (94.5) & 77.7 (90.1) & 61.0 (86.2) & 43.4 (82.4) & 9.7 (71.6) & 0.0 (38.8) & \multirow{8}{*}{130M} & \multirow{8}{*}{855K} \\
        \multicolumn{2}{l}{Sinkhorn Sort} && 92.8 (95.0) & 81.1 (91.7) & 65.6 (88.2) & 49.7 (84.7) & 12.6 (74.2) & 0.0 (41.2) \\
        \multicolumn{2}{l}{Fast Sort \& Rank} && 90.6 (93.5) & 71.5 (87.2) & 49.7 (81.3) & 29.0 (75.2) & 2.8 (60.9) & -- \\
        \multirow{5}{*}{Diffsort} & Logistic && 92.0 (94.5) & 77.2 (89.8) & 54.8 (83.6) & 37.2 (79.4) & 4.7 (62.3) & 0.0 (56.3) \\
        & Logistic w/ ART && 94.3 (96.1) & 83.4 (92.6) & 71.6 (90.0) & 56.3 (86.7) & 23.5 (79.4) & 0.5 (64.9) \\
        & Reciprocal && 94.4 (96.1) & 85.0 (93.3) & 73.4 (90.7) & 60.8 (88.1) & 30.2 (81.9) & 1.0 (66.8) \\
        & Cauchy && 94.2 (96.0) & 84.9 (93.2) & 73.3 (90.5) & 63.8 (89.1) & 31.1 (82.2) & 0.8 (63.3) \\
        & Optimal && 94.6 (96.3) & 85.0 (93.3) & 73.6 (90.7) & 62.2 (88.5) & 31.8 (82.3) & 1.4 (67.9) \\
        \midrule
        \multirow{3}{*}{Ours} & \multirow{3}{*}{Error-Free DSFs} & CNN & 95.2 (96.7) & 87.2 (94.2) & 76.6 (91.6) & 64.8 (89.2) & 34.7 (83.3) & 2.1 (69.2) & 130M & 855K \\
        && Transformer-S & 95.9 (97.1) & 94.8 (97.5) & 90.8 (96.5) & 86.9 (95.7) & 74.3 (93.6) & 37.8 (87.7) & 130M & 665K \\
        && Transformer-L & 96.5 (97.5) & 95.4 (97.7) & 92.9 (97.2) & 90.1 (96.5) & 82.5 (95.0) & 46.2 (88.9) & 137M & 3.104M \\
        \bottomrule
    \end{tabular}
    \end{center}
\end{table}

\begin{table}[t]
    \caption{Results on sorting the SVHN dataset. FLOPs is computed on the basis of a sequence length 3. All the values are averaged over 5 runs with different seeds.}
    \label{tab:svhn}
    \begin{center}
    \scriptsize
    \setlength{\tabcolsep}{4pt}
    \begin{tabular}{llcccccccc}
        \toprule
        \multirow{2}{*}{\textbf{Method}} && \multirow{2}{*}{\textbf{Model}} & \multicolumn{5}{c}{\textbf{Sequence Length}} & \multirow{2}{*}{\textbf{FLOPs}} & \multirow{2}{*}{\textbf{\#Param.}} \\
        &&& 3 & 5 & 7 & 9 & 15 \\
        \midrule
        \multirow{5}{*}{Diffsort} & Logistic & \multirow{5}{*}{CNN} & 76.3 (83.2) & 46.0 (72.7) & 21.8 (63.9) & 13.5 (61.7) & 0.3 (45.9) & \multirow{5}{*}{326M} & \multirow{5}{*}{1.226M}\\
        & Logistic w/ ART && 83.2 (88.1) & 64.1 (82.1) & 43.8 (76.5) & 24.2 (69.6) & 2.4 (56.8) \\
        & Reciprocal && 85.7 (89.8) & 68.8 (84.2) & 53.3 (80.0) & 40.0 (76.3) & 13.2 (66.0) \\
        & Cauchy && 85.5 (89.6) & 68.5 (84.1) & 52.9 (79.8) & 39.9 (75.8) &  13.7 (66.0) \\
        & Optimal && 86.0 (90.0) & 67.5 (83.5) & 53.1 (80.0) & 39.1 (76.0) & 13.2 (66.3) \\
        \midrule
        \multirow{3}{*}{Ours} & \multirow{3}{*}{Error-Free DSFs} & CNN & 86.8 (90.6) & 68.9 (84.5) & 53.4 (80.4) & 40.0 (77.0) & 12.0 (65.3) & 326M & 1.226M \\
        && Transformer-S & 86.6 (90.2) & 72.6 (85.7) & 62.5 (83.5) & 48.6 (79.3) & 19.3 (69.6) & 210M & 1.223M \\
        && Transformer-L & 88.0 (91.2) & 74.0 (86.3) & 63.9 (83.8) & 50.2 (80.1) & 21.7 (71.2) & 332M & 3.475M \\
        \bottomrule
    \end{tabular}
    \end{center}
\end{table}

\paragraph{Datasets.}
As steadily utilized in the previous work~\citep{GroverA2019iclr,CuturiM2019neurips,BlondelM2020icml,PetersenF2021icml,PetersenF2022iclr},
we create a four-digit dataset by concatenating four images from the MNIST dataset~\citep{LeCunY1998mnist};
see~\figref{fig:architecture} for some examples of the dataset.
On the other hand,
the SVHN dataset~\citep{NetzerY2011neuripsw} contains multi-digit numbers extracted from street view images and is therefore suitable for sorting.

\paragraph{Experimental Details.}
We conduct the experiments 5 times by varying random seeds to report the average of $\accem$ and $\accew$,
and use the optimal monotonic sigmoid function as DSFs.
The performance of each model is measured by a test dataset.
We use the AdamW optimizer~\citep{LoshchilovI2018iclr},
and train each model for 200,000 steps on the four-digit MNIST dataset and 300,000 steps on the SVHN dataset.
Unless otherwise noted, we follow the same settings of the work~\citep{PetersenF2022iclr} for fair comparisons.
Missing details are described in~\secref{sec:details_of_experiments}.

\paragraph{Results.}
\tabsref{tab:mnist}{tab:svhn} show the results of the previous work such as NeuralSort~\citep{GroverA2019iclr}, Sinkhorn Sort~\citep{CuturiM2019neurips}, Fast Sort \& Rank~\citep{BlondelM2020icml}, and Diffsort~\citep{PetersenF2021icml,PetersenF2022iclr}, and our methods on the MNIST and SVHN datasets, respectively.
When we use the conventional CNN as a permutation-equivariant network,
our method shows better than or comparable to the previous methods.
As we exploit more powerful models,
i.e., the Transformer-Small and Transformer-Large permutation-equivariant models,
our approaches show better results compared to other existing methods including our method with the conventional CNN.\footnote{Thanks to many open-source projects, we can easily run the baseline methods. However, it is difficult to reproduce some results due to unknown random seeds. For this reason, we bring the results from the work~\citep{PetersenF2022iclr}, and use fixed random seeds, i.e., 42, 84, 126, 168, 210, for our methods.}

\subsection{Sorting Image Fragments}
\label{sec:sorting_fragments}

\begin{table}[t]
    \caption{Results on sorting image fragments of MNIST and CIFAR-10. 2 $\times$ 2 and 3 $\times$ 3 indicate the numbers of fragments, and 14 $\times$ 14, 9 $\times$ 9, 16 $\times$ 16, and 10 $\times$ 10 (in parentheses) indicate the sizes of image fragments. FLOPs is computed on the basis of the MNIST 2 $\times$ 2 (14 $\times$ 14) case and the CIFAR-10 2 $\times$ 2 (16 $\times$ 16) case. All the values are averaged over 5 runs with different seeds.}
    \label{tab:fragments}
    \begin{center}
    \scriptsize
    \setlength{\tabcolsep}{3pt}
    \begin{tabular}{llccccccccc}
        \toprule
        \multirow{3}{*}{\textbf{Method}} && \multirow{3}{*}{\textbf{Model}} & \multicolumn{4}{c}{\textbf{MNIST}} & \multicolumn{4}{c}{\textbf{CIFAR-10}} \\
        &&& 2 $\times$ 2  & 3 $\times$ 3 & \multirow{2}{*}{\textbf{FLOPs}} & \multirow{2}{*}{\textbf{\#Param.}} & 2 $\times$ 2 & 3 $\times$ 3 & \multirow{2}{*}{\textbf{FLOPs}} & \multirow{2}{*}{\textbf{\#Param.}} \\
        &&& (14 $\times$ 14) & (9 $\times$ 9) &&& (16 $\times$ 16) & (10 $\times$ 10) \\
        \midrule
        \multirow{5}{*}{Diffsort} & Logistic & \multirow{5}{*}{CNN} & 98.5 (99.0) &  5.3 (42.9) & \multirow{5}{*}{1.498M} & \multirow{5}{*}{84K} & 56.9 (73.6) & 0.8 (27.7) & \multirow{5}{*}{1.663M} & \multirow{5}{*}{85K} \\
        & Logistic w/ ART && 98.4 (99.1) & 5.4 (42.9) &&& 56.7 (73.4) & 0.7 (27.7) & &\\
        & Reciprocal && 98.4 (99.2) & 5.3 (42.9) &&& 56.7 (73.4) & 0.7 (27.8) & &\\
        & Cauchy && 98.4 (99.2) & 5.3 (42.9) &&& 56.9 (73.6) & 0.9 (27.9) & &\\
        & Optimal && 98.4 (99.1) & 5.3 (43.0) &&& 56.6 (73.4) & 0.7 (27.7) & &\\
        \midrule
        \multirow{2}{*}{Ours} & \multirow{2}{*}{Error-Free DSFs} & CNN & 98.4 (99.2) & 5.2 (42.6) & 1.498M & 84K & 56.9 (73.6) & 0.8 (28.0) & 1.663M & 85K \\
        && Transformer &  98.6 (99.2) & 5.6 (43.7) & 946K & 87K & 58.1 (74.2) & 0.9 (28.3) & 1.111M & 87K \\
        \bottomrule
    \end{tabular}
    \end{center}
\end{table}

\paragraph{Datasets.}
For experiments on sorting image fragments,
we use two datasets: the MNIST dataset~\citep{LeCunY1998mnist} and the CIFAR-10 dataset~\citep{KrizhevskyA2009tr}.
Similar to the work~\citep{MenaGE2018iclr},
we create multiple fragments or patches from a single-digit image of the MNIST dataset to utilize themselves as inputs -- for example, 4 fragments of size 14 $\times$ 14 or 9 fragments of size 9 $\times$ 9 are created from a single image.
Similarly, the image included in the CIFAR-10 dataset, which contains one of various objects, e.g., birds and cats, is split to multiple patches, and then is used to the experiments on sorting image fragments.
See~\tabref{tab:fragments} for the details of the image fragments and their sizes.

\paragraph{Experimental Details.}
Similar to the experiments on sorting multi-digit images,
an optimal monotonic sigmoid function is used as DSFs.
Since the size of inputs is much smaller than the experiments on sorting multi-digit images, shown in~\secref{sec:sorting_digit},
we modify the architectures of the CNNs and the Transformer-based models.
We reduce the kernel size of convolution layers from 5 to 3 and make strides 2.
Due to the small input sizes,
we omit the results by the Transformer-Large model for these experiments.
Additionally,
max-pooling operations are removed.
Similar to the experiments in~\secref{sec:sorting_digit},
we use the AdamW optimizer~\citep{LoshchilovI2018iclr}.
Moreover, each model is trained for 50,000 steps when the number of fragments is $2 \times 2$, i.e., when a sequence length is 4,
and 100,000 steps for $3 \times 3$ fragments, i.e., when a sequence length is 9.
Additional information including the details of neural architectures can be found in~\secsref{sec:details_of_architectures}{sec:details_of_experiments}.

\paragraph{Results.}
\tabref{tab:fragments} represents the experimental results on both datasets of image fragments,
which are created from the MNIST and CIFAR-10 datasets.
Similar to the experiments on sorting multi-digit images,
the more powerful architecture improves performance in this task.

According to the experimental results,
we achieve satisfactory performance by applying the error-free DSFs, combined loss,
and Transformer-based models with multi-head attention.
We provide detailed discussion on how they contribute to the performance gains in~\secref{sec:discussion},
and empirical studies on steepness, learning rate, and a balancing hyperparameter in~\secref{sec:discussion} and the appendices.

\section{Related Work}
\label{sec:related}

\paragraph{Differentiable Sorting Algorithms.}
To allow us to differentiate a sorting algorithm,
\citet{GroverA2019iclr} have proposed the continuous relaxation of $\argsort$ operator, which is named NeuralSort.
In this work, the output of NeuralSort only satisfies the row-stochastic matrix property, although \citet{GroverA2019iclr} attempt to employ a gradient-based optimization strategy in learning a neural sorting algorithm.
\citet{CuturiM2019neurips} propose a smoothed ranking and sorting operator using optimal transport, which is the natural relaxation for assignments.
To reduce the cost of the optimal transport, the Sinkhorn algorithm~\citep{CuturiM2013neurips} is used.
Then, \citet{BlondelM2020icml} have proposed a differentiable sorting and ranking operator with $\calO(n \log n)$ time and $\calO(n)$ space complexities,
which is named Fast Rank \& Sort,
by constructing differentiable operators as projections on permutahedron.
\citet{PetersenF2021icml} have suggested a differentiable sorting network 
with relaxed conditional swap functions.
Recently, the same authors analyze the characteristics of the relaxation of monotonic conditional swap functions,
and propose several monotonic swap functions, e.g., the Cauchy and optimal monotonic functions~\citep{PetersenF2022iclr}.

\paragraph{Permutation-Equivariant Networks.}
A seminal architecture, long short-term memory~\citep{HochreiterS97neco} can be used in modeling a sequence without any difficulty,
and a sequence-to-sequence model~\citep{SutskeverI2014neurips} can be employed to cope with a sequence.
However, as discussed in the work by~\citet{VinyalsO2016iclr}, an unordered sequence can have good orderings, by analyzing the effects of permutation thoroughly.
\citet{ZaheerM2017neurips} propose a permutation-invariant or permutation-equivariant network, named Deep Sets, and prove the permutation invariance and permutation equivariance of the proposed models.
By utilizing the Transformer network~\citep{VaswaniA2017neurips},
\citet{LeeJ2019icml} have proposed a permutation-equivariant network.

\section{Discussion}
\label{sec:discussion}

\paragraph{Numerical Analysis on Our Methods and Their Hyperparameters.}

\begin{table}[t]
    \centering
    \caption{Study on comparisons of Diffsort with the optimal monotonic sigmoid function and our methods in the experiments on sorting the four-digit MNIST dataset.}
    \label{tab:ablation_ours}
    \begin{center}
    \scriptsize
    \setlength{\tabcolsep}{8pt}
    \begin{tabular}{lccccccc}
        \toprule
        \multirow{2}{*}{\textbf{Method}} & \multirow{2}{*}{\textbf{Model}} & \multicolumn{6}{c}{\textbf{Sequence Length}} \\
        && 3 & 5 & 7 & 9 & 15 & 32 \\
        \midrule
        Diffsort & \multirow{2}{*}{CNN} & 94.6 (96.3) & 85.0 (93.3) & 73.6 (90.7) & 62.2 (88.5) & 31.8 (82.3) & 1.4 (67.9) \\
        Ours && 95.2 (96.7) & 87.2 (94.2) & 76.6 (91.6) & 64.8 (89.2) & 34.7 (83.3) & 2.1 (69.2) \\
        \midrule
        Diffsort & \multirow{2}{*}{Transformer-S} & 95.9 (97.1) & 90.2 (95.4) & 83.9 (94.2) & 77.2 (92.9) & 57.3 (89.7) & 16.3 (81.7) \\
        Ours && 95.9 (97.1) & 94.8 (97.5) & 90.8 (96.5) & 86.9 (95.7) & 74.3 (93.6) & 37.8 (87.7) \\
        \midrule
        Diffsort & \multirow{2}{*}{Transformer-L} & 96.5 (97.5) & 92.6 (96.4) & 87.6 (95.3) & 82.6 (94.3) & 67.8 (92.0) & 32.1 (85.7) \\
        Ours && 96.5 (97.5) & 95.4 (97.7) & 92.9 (97.2)  & 90.1 (96.5) & 82.5 (95.0) & 46.2 (88.9) \\
        \bottomrule
    \end{tabular}
    \end{center}
\end{table}

We carry out numerical analyses on the effects of our methods,
compared to a baseline method, i.e., Diffsort with the optimal monotonic sigmoid function.
As reported in~\tabref{tab:ablation_ours},
we demonstrate that our methods better sorting performance compared to the baseline,
which implies that our suggestions are effective in the sorting tasks.
In these experiments,
we follow the settings of the experiments described in~\secref{sec:sorting_digit}.
Moreover, we present numerical analyses on a balancing hyperparameter, steepness, and a learning rate in~\secsref{sec:ablation_balancing}{sec:ablation_steepness_lr}.

\paragraph{Analysis on Performance Gains.}
According to the results in~\secsref{sec:experiments}{sec:discussion},
we can argue that the error-free DSFs, our proposed loss, and the Transformer-based models contribute to better performance considerably compared to the baseline methods.
As shown in~\tabsref{tab:mnist}{tab:ablation_ours},
the performance gains by the Transformer-based models are more substantial than the gains by the error-free DSFs and our loss,
since multi-head attention is effective for capturing long-term dependency (or dependency between multiple instances in our case) and reducing inductive biases.
However, as will be discussed in the following,
the hard permutation matrices can be used in the case that does not allow us to mix instances in $\bX$,
e.g., sorting image fragments in~\secref{sec:sorting_fragments}.

\paragraph{Utilization of Hard Permutation Matrices.}
While the use of a soft permutation matrix $\bP_{\soft}$ makes given instances mixed,
a hard permutation matrix $\bP_{\hard}$ is instrumental in applying $\bP_{\hard}$ in a problem that requires swapping given instances exactly.
More precisely,
each row of $\bP_{\soft}^\top \bX$ is a linear combination of some column of $\bP_{\soft}$ and $\bX$, but one of $\bP_{\hard}^\top \bX$ corresponds to an exact row in $\bX$.
This property can be used to preserve input instances from sorting operations.
The experiments in~\secref{sec:sorting_fragments} can be considered as one of such cases,
and it exhibits the strength of our method, not only the performance in $\accem$ and $\accew$.

\paragraph{Effects of Multi-Head Attention in the Problem \eqref{eqn:sorting_bX}.}
We follow the model architecture used in the previous work~\citep{GroverA2019iclr,CuturiM2019neurips,PetersenF2021icml,PetersenF2022iclr} for the CNNs.
However, as shown in~\tabssref{tab:mnist}{tab:svhn}{tab:fragments},
the model is not enough to show the best performance.
In particular, whereas the model capacity, i.e., FLOPs and the number of parameters, of the Transformer-Small models is almost matched to or less than the capacity of the CNNs,
the results by the Transformer-Small models outperform the results by the CNNs.
We presume that these performance gains are derived from a multi-head attention's ability to capture long-term dependency and reduce inductive biases,
as widely stated in many recent studies in diverse fields such as natural language processing~\citep{VaswaniA2017neurips,DevlinJ2018arxiv,BrownTB2020neurips}, computer vision~\citep{DosovitskiyA2021iclr,LiuZ2021iccv}, and 3D vision~\citep{NashC2020icml,ZhaoH2021iccv}.
Especially, unlike the instance-wise CNNs,
our permutation-equivariant Transformer architecture utilizes self-attention for given instances,
so that our model can productively compare instances in a sequence and effectively learn the relative relationship between them.

\paragraph{Further Study of Differentiable Sorting Algorithms.}
Differentiable sorting encourages us to train a mapping from an abstract input to an ordinal score using supervision on permutation matrices.
However, this line of studies is limited to a sorting problem of high-dimensional data with clear ordering information, e.g., multi-digit numbers.
As the further study of differentiable sorting,
we can expand this framework to sort more ambiguous data, which contains implicitly ordinal information.

\section{Conclusion}
\label{sec:conclusion}

In this paper, we defined a softening error, induced by a monotonic DSF,
and demonstrated several evidences of the error accumulation problem.
To resolve the error accumulation problem,
an error-free DSF is proposed, inspired by a straight-through estimator.
Moreover, we provided the simple theoretical and empirical analyses that our error-free DSF successfully achieves a zero error and also holds a non-decreasing condition and differentiability.
By combining all components, we suggested a generalized neural sorting network with the error-free DSF and multi-head attention.
Finally, we showed that our methods are better than or comparable to other algorithms in diverse benchmarks.

\subsubsection*{Ethics Statement}

As discussed in~\secref{sec:discussion},
the hard permutation matrices produced by our methods allow us to swap instances exactly,
not the linear combination of instances.
This characteristic is required when we are given the final outcomes of sorting as supervision.
This scenario is tested by the experiments presented in~\secref{sec:sorting_fragments}.
In these experiments,
we are supposed that original images are provided as supervision.
Building on the advantages of neural network-based sorting networks,
we expand their practical significance to the cases that need hard permutation matrices.
On the other hand,
the nature of neural sorting networks may yield a potential negative societal impact.
If this line of research including our approaches is employed to sort controversial high-dimensional data such as beauty and intelligence,
it can be considered as the unethical use cases of artificial intelligence.

\subsubsection*{Acknowledgments}
This work was supported by the IITP grants (2022-0-00290: Visual Intelligence for Space-Time Understanding and Generation based on Multi-layered Visual Common Sense, 2022-0-00264:
Comprehensive Video Understanding and Generation with Knowledge-based Deep Logic Neural Network) funded by Ministry of Science and ICT, Republic of Korea.

\bibliography{kjt}
\bibliographystyle{iclr2024_conference}

\newpage
\appendix

\section{Optimal Monotonic Sigmoid Functions}
\label{sec:optimal_monotonic}

\begin{figure}[ht]
    \centering
    \includegraphics[width=0.35\textwidth]{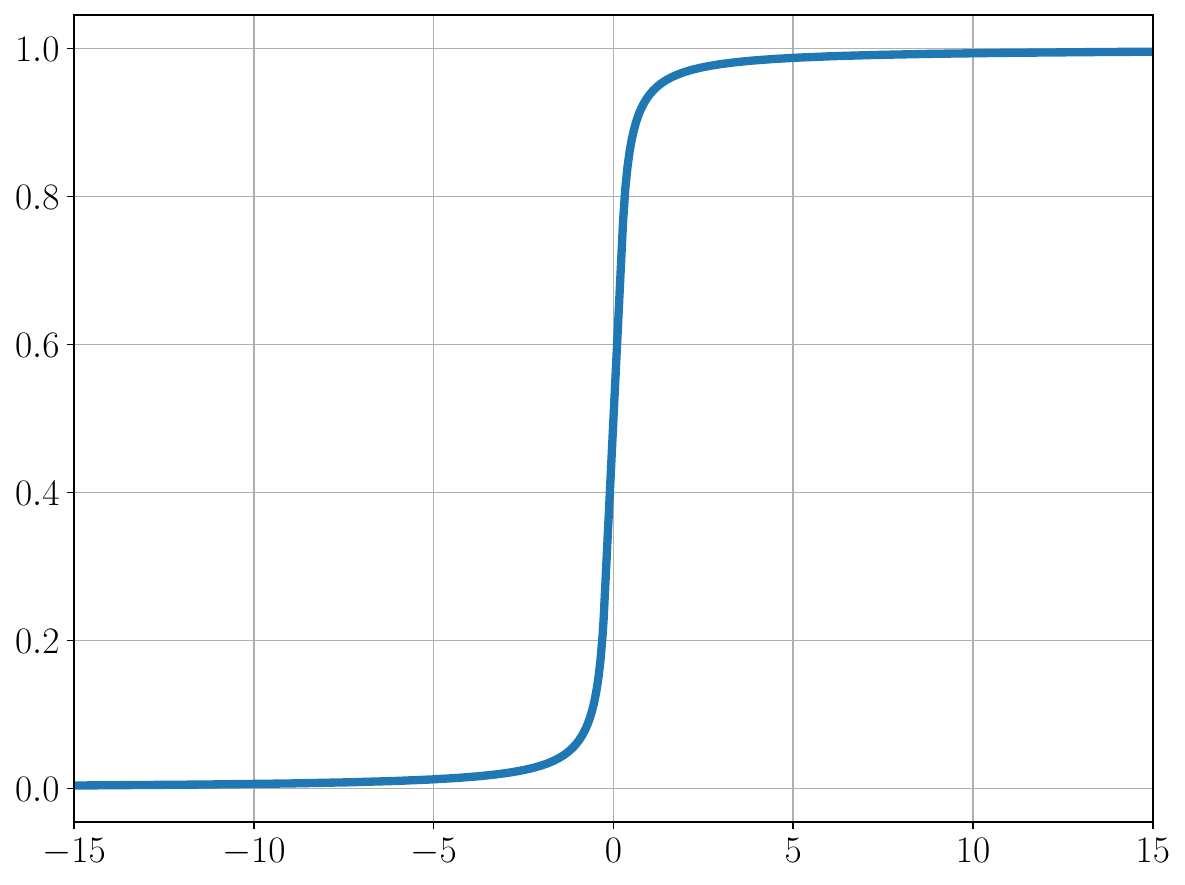}
    \caption{An optimal monotonic sigmoid function, which is presented in \eqref{eqn:optimal}.}
    \label{fig:optimal_monotonic}
\end{figure}

We visualize an optimal monotonic sigmoid function in~\figref{fig:optimal_monotonic}.

\section{Comparisons of Differentiable Swap Functions}
\label{sec:comparisons_dsf}

\begin{figure}[ht]
    \centering
    \includegraphics[width=0.35\textwidth]{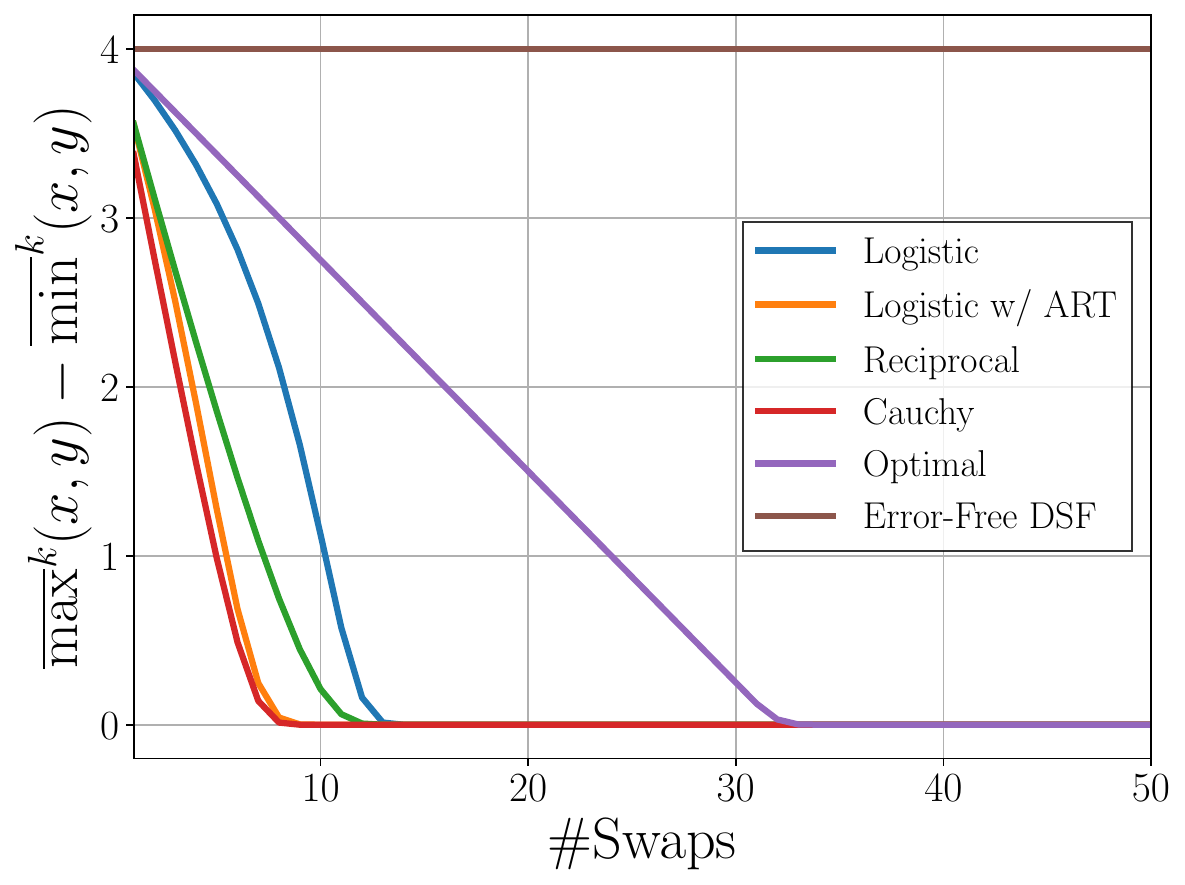}
    \quad
    \includegraphics[width=0.35\textwidth]{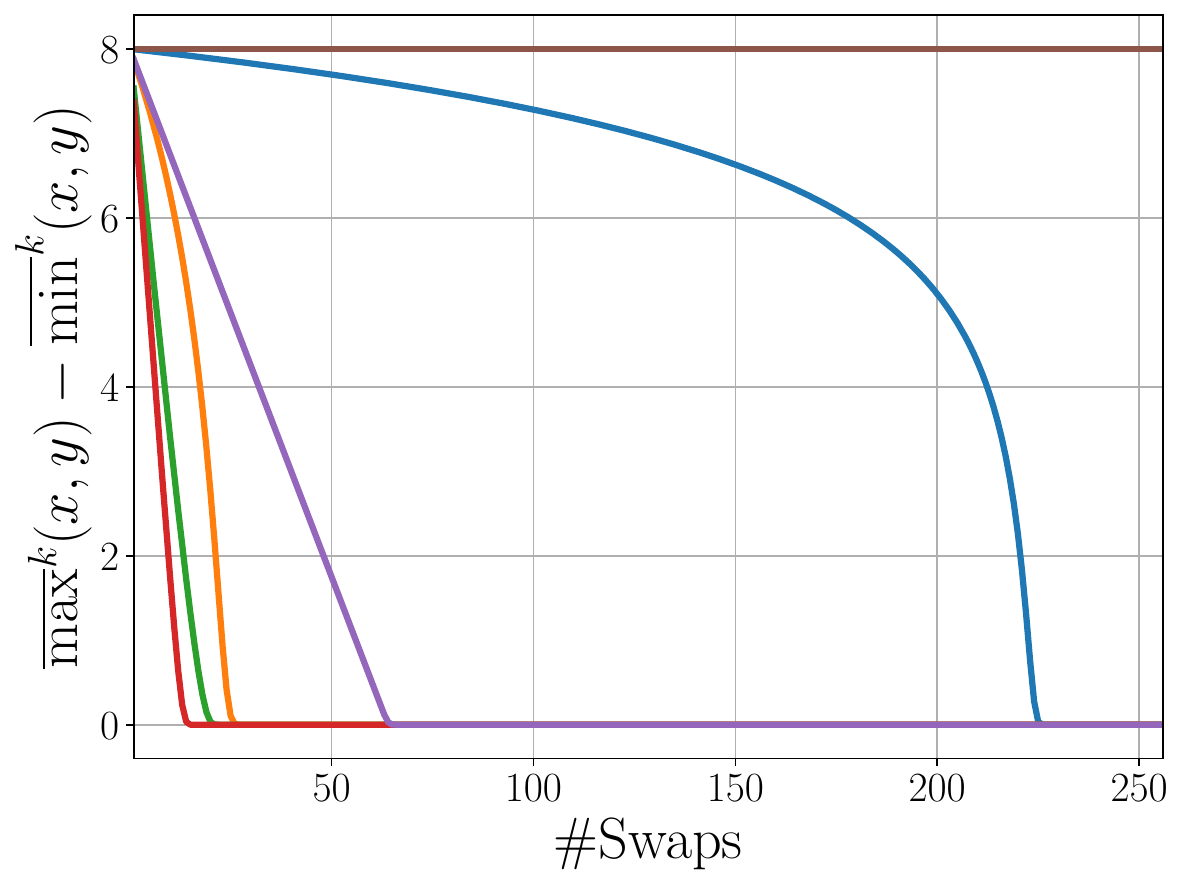}
    \caption{Comparisons of diverse DSFs in terms of the numbers of swap functions applied. Our error-free DSF does not change the original $x$ and $y$, unlike other DSFs. We initially set $x = 4, y = 0$ for the left panel or $x = 8, y = 0$ for the right panel, where $k = \#\textrm{Swaps}$.}
    \label{fig:swaps}
\end{figure}

As depicted in~\figref{fig:swaps},
some sigmoid functions such as the logistic, logistic with ART, reciprocal, Cauchy, and optimal monotonic functions
suffer from the error accumulation problem;
see the work~\citep{PetersenF2022iclr} for the details of such sigmoid functions.
For the case of the Cauchy function, two values are close enough at the $9$th step in the left panel of \figref{fig:swaps} and the $15$th step in the right panel of \figref{fig:swaps};
we calculate the corresponding steps
where a difference between two values becomes smaller than 0.001.

\section{Comparisons of Different Sorting Networks}

\begin{figure}[ht]
    \centering
    \includegraphics[width=0.35\textwidth]{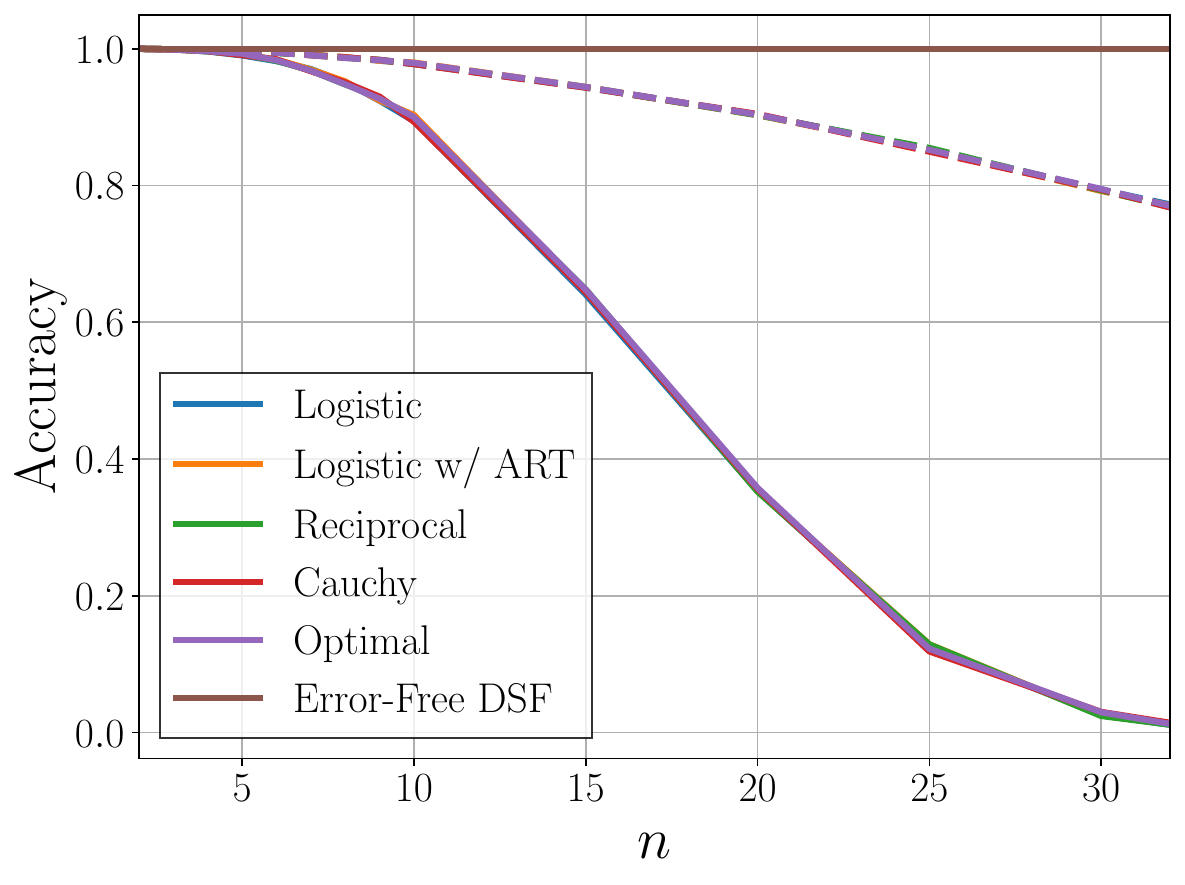}
    \caption{$\accem$ (solid) and $\accew$ (dashed), versus sequence lengths. Without a mapping $g$ from $\bx$ to $s$, we uniformly sample $n$ elements from $[-10, 10]$, which are considered as ordinal scores $\bs$. Then, we sort them using the respective sorting networks and measure accuracy over the ground-truth permutation matrices computed by $n$ elements. We repeat each experiment 10,000 times.}
    \label{fig:zero_error}
\end{figure}

\figref{fig:zero_error} shows the comparisons of different sorting networks by varying sequence lengths.
$\accem$ and $\accew$ are measured to assess the sorting networks.

\section{Proof of \propref{prop:doubly_stochastic}}
\label{sec:proof_prop_1}

\begin{proof}
    If two elements at indices $i$ and $j$ are swapped by a single swap function, $[\bP]_{kk} = 1$ for $k \in [n] \backslash \{i, j\}$, $[\bP]_{kl} = 0$ for $k \neq l$, $k, l \in [n] \backslash \{i, j\}$, $[\bP]_{ii} = [\bP]_{jj} = p$, and $[\bP]_{ij} = [\bP]_{ji} = 1 - p$, where $p$ is the output of a sigmoid function, i.e., $p = \sigma(y - x)$ or $p = \lfloor\sigma(y - x)\rceil$. Since the multiplication of doubly-stochastic matrices is still doubly-stochastic, \propref{prop:doubly_stochastic} is true.
\end{proof}

\section{Proof of \propref{prop:error_accumulation}}
\label{sec:proof_prop_2}

\begin{proof}
    Let $\softmin^k(x, y)$ and $\softmin^k(x, y)$ be minimum and maximum values where $\swap$ with $\softmin$ and $\softmax$ is applied $k$ times repeatedly.
    By~\defref{def:error} and $\softmin^{i} < \softmin^{i + 1}$ and $\softmax^{i + 1} < \softmax^{i}$,
    the following inequalities are satisfied:
    \begin{align}
        &\min(x, y) < \softmin^1(x, y) < \softmin^2(x, y) < \cdots < \softmin^k(x, y) \nonumber\\
        &\leq \softmax^k(x, y) < \cdots < \softmax^2(x, y) < \softmax^1(x, y) < \max(x, y),
        \label{eqn:proof_inequality_min_max}
    \end{align}
    under the assumption that $\nabla_x \sigma(x) > 0$.
    By~\eqref{eqn:proof_inequality_min_max} and $\nabla_x \sigma(x) > 0$, we can obtain the following inequality:
    \begin{equation}
        0 \leq \softmax^{k+1}(x, y) - \softmin^{k+1}(x, y) < \softmax^k(x, y) - \softmin^k(x, y) < \softmax^{k-1}(x, y) - \softmin^{k-1}(x, y).
    \end{equation}
    Therefore, $\lim_{k \to \infty} \softmax^k(x, y) - \softmin^k(x, y) = 0$,
    and $\softmin^k(x, y) = \softmax^k(x, y)$ if $k \to \infty$.
    To sum up, a softening error for $k \to \infty$ is $(\max(x, y) - \min(x, y)) / 2$ since $\softmax^k(x, y) = (\min(x, y) + \max(x, y)) / 2$ by \eqref{eqn:error}.
    Note that the assumption $\nabla_x \sigma(x) > 0$ implies that $\sigma(\cdot)$ is a strictly monotonic sigmoid function.
\end{proof}

\section{Proof of \propref{prop:zero_error}}
\label{sec:proof_prop_3}

\begin{proof}
    According to~\defref{def:error},
    given $x$ and $y$,
    the softening error $x' - \min(x, y)$ is expressed as the following:
    \begin{align}
        x' - \min(x, y) &= \left(\min(x, y) - \softmin(x, y)\right)_{\sg} + \softmin(x, y) - \min(x, y) \nonumber\\
        &= \min(x, y) - \softmin(x, y) + \softmin(x, y) - \min(x, y) \nonumber\\
        &= 0,
        \label{eqn:proof_zero_error}
    \end{align}
    while a forward pass is applied.
    The proof for $\max(x, y) - y'$ is omitted because it is obvious.
\end{proof}

\section{Details of Permutation-Equivariant Networks with Multi-Head Attention}

\begin{figure}[ht]
    \centering
    \includegraphics[width=0.70\textwidth]{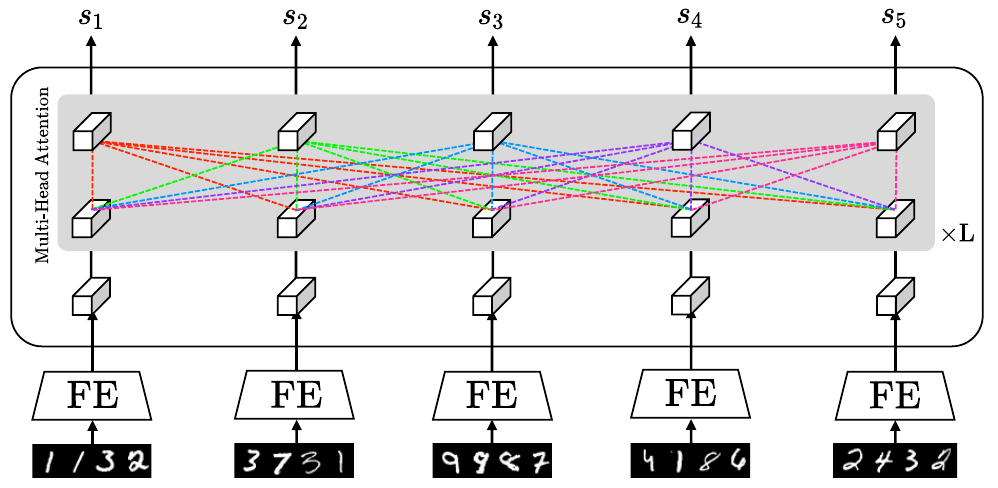}
    \caption{Simple illustration of our permutation-equivariant network with multi-head attention. For the sake of brevity, we do not depict the details of multi-head attention, layer normalization, and feed-forward networks. Note that FE stands for a feature extractor.}
    \label{fig:perm_equiv_net}
\end{figure}

\figref{fig:perm_equiv_net} illustrates the Transformer-based permutation-equivariant network, which is implemented with multi-head attention~\citep{VaswaniA2017neurips}.
For the sake of brevity,
this illustration briefly depicts our permutation-equivariant network without detailing the specifics of the Transformer network.
Each instance in a sequence is first processed by a feature extractor,
i.e., a convolutional neural network.
Then,
a sequence of latent vectors is provided into the Transformer network without positional encoding.
At each multi-head attention module,
each latent vector is updated by the aggregation of the latent vectors given
where the aggregation is determined by the operations explained in~\secref{sec:method} and~\eqref{eqn:attentions}.
After passing through multiple layers of multi-head attention and the corresponding components such as layer normalization and feed-forward neural networks,
the final fully-connected layer is applied to transform the outputs of the Transformer network into a score vector $\bs$.
The details of the Transformer network can be found in the work by~\citet{VaswaniA2017neurips}.

\section{Split Strategy to Reduce the Number of Possible Permutations}
\label{sec:split_strategy_permutations}

\begin{figure}[ht]
    \centering
    \includegraphics[width=0.35\textwidth]{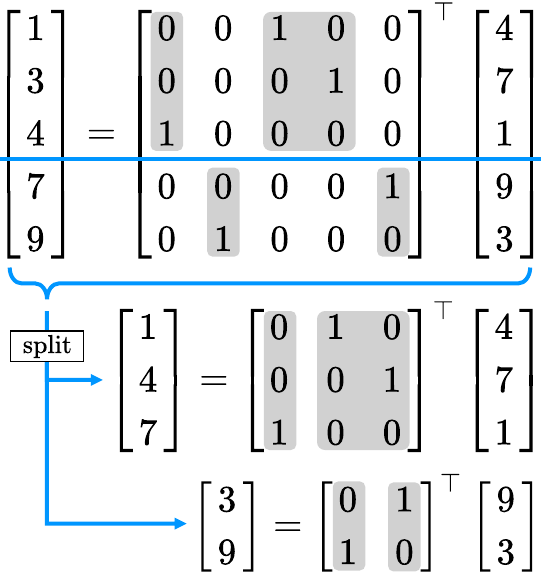}
    \caption{A split process of $\bP$.}
    \label{fig:exp_splits}
\end{figure}

As presented in~\eqref{eqn:perm_zero_error} and \propref{prop:doubly_stochastic},
the permutation matrix for the error-free DSF is a discretized doubly-stochastic matrix, which is denoted as $\bP_{\hard}$, in a forward pass, and is differentiable in a backward pass.
Here, we show an interesting proposition of $\bP_{\hard}$:
\begin{proposition}
    \label{prop:split}
    Let $\bs \in \bbR^n$ and $\bP_{\hard} \in \bbR^{n \times n}$ be an unordered sequence and the corresponding permutation matrix to transform it to $\bs_{\ordered}$, respectively.
    We are able to split $\bs$ to two subsequences $\bs_1 \in \bbR^{n_1}$ and $\bs_2 \in \bbR^{n_2}$ where $\bs_1 = [\bs]_{1:n_1}$ and $\bs_2 = [\bs]_{n_1 + 1:n_1 + n_2}$.
    Then, $\bP_{\hard}$ is also split to $\bP_1 \in \bbR^{n_1 \times n_1}$ and $\bP_2 \in \bbR^{n_2 \times n_2}$,
    so that $\bP_1$ and $\bP_2$ are (discretized) doubly-stochastic.
\end{proposition}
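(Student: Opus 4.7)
The plan is to reduce the claim to a statement about ordinary permutation matrices by first invoking \propref{prop:doubly_stochastic} to upgrade $\bP_{\hard}$ from a generic doubly-stochastic matrix to a genuine permutation matrix, and then to argue that under the split strategy the corresponding bijection restricts to two independent bijections, one on each index block.

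First, I would apply \propref{prop:doubly_stochastic} to conclude that $\bP_{\hard}$ is a $\{0,1\}$-valued doubly-stochastic matrix whose rows and columns each sum to one; since the entries are binary, $\bP_{\hard}$ is in fact a true permutation matrix, encoding a bijection $\pi\colon[n]\to[n]$ via $[\bP_{\hard}]_{i,\pi(i)}=1$. The error-free construction in \eqref{eqn:swap_zero_min_max} guarantees that the forward pass produces exactly this hard, $\{0,1\}$ structure (the $\sg$-wrapped term cancels the soft output in the forward direction), so the permutation-matrix reduction is rigorous and is unaffected by the straight-through gradient.

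Second, I would formalize the split strategy so that the partition of the source indices $\{1,\dots,n_1\}\cup\{n_1+1,\dots,n_1+n_2\}$ into the two blocks is compatible with $\pi$, in the sense that $\pi(\{1,\dots,n_1\})$ coincides with one of the two target blocks of size $n_1$ (and symmetrically for the second block). This is the situation envisioned in \secref{sec:method}, where the split is applied jointly to $\bP_{\hard}$, $\bP_{\gt}$, and $\bX$ in order to collapse equivalent orderings. Under this compatibility, the off-diagonal $n_1\times n_2$ and $n_2\times n_1$ blocks of $\bP_{\hard}$ are identically zero, and the two diagonal blocks $\bP_1$ and $\bP_2$ capture the restricted bijections $\pi|_{\{1,\dots,n_1\}}$ and $\pi|_{\{n_1+1,\dots,n_1+n_2\}}$.

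Third, I would note that each $\bP_i$ is itself a $\{0,1\}$-matrix representing a bijection on an index set of size $n_i$, so every row and every column of $\bP_i$ contains exactly one $1$; this immediately yields $\sum_j [\bP_i]_{jk}=1$ and $\sum_k[\bP_i]_{jk}=1$, i.e., each block is discretized doubly-stochastic, as required.

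The main obstacle I anticipate is making the split strategy precise enough that the off-diagonal blocks provably vanish, rather than being zero only in favorable cases; a naive partition by raw input position need not interact correctly with the sorting permutation. I would address this by defining the split in terms of the target block structure used in the loss reformulation of \eqref{eqn:hard_objective}, so that the compatibility of $\pi$ with the partition is built into the construction. Once that framing is fixed, the proof is essentially bookkeeping: a permutation restricted to a saturated block is a permutation, and permutation matrices are trivially doubly-stochastic.
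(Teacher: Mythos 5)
Your first step (using \propref{prop:doubly_stochastic} and the forward-pass behaviour of \eqref{eqn:swap_zero_min_max} to conclude that $\bP_{\hard}$ is a genuine $\{0,1\}$ permutation matrix encoding a bijection $\pi$) matches the ingredient the paper relies on, namely that the entries are exactly zero or one. The gap is in your second step: you prove the claim only under an added ``compatibility'' hypothesis that $\pi$ maps $\{1,\ldots,n_1\}$ onto a contiguous target block, so that the off-diagonal $n_1\times n_2$ and $n_2\times n_1$ blocks of $\bP_{\hard}$ vanish and $\bP_1,\bP_2$ can be read off as the diagonal blocks. That hypothesis is not part of the proposition and is false in general: the split in the statement is by raw input position ($\bs_1=[\bs]_{1:n_1}$, $\bs_2=[\bs]_{n_1+1:n_1+n_2}$), and for a generic sequence the elements of the two blocks interleave in the sorted order, so the diagonal blocks of $\bP_{\hard}$ have rows and columns summing to zero and are not doubly-stochastic. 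Restricting to compatible splits proves a strictly weaker statement and, as you yourself note, the ``naive partition by raw input position'' is exactly the case the proposition is about; deferring it to a redefinition of the split via the target blocks does not repair this, it changes the claim.

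The paper's argument avoids block-diagonality altogether: since a split does not change the relative order of elements within the same block, and since the entries of $\bP_{\hard}$ are exactly zero or one, one extracts from $\bP_{\hard}$ the sub-permutation induced on each block (keep the rows of that block and the columns where those rows carry their ones, then compress), which is always an $n_i\times n_i$ permutation matrix and hence discretized doubly-stochastic, for \emph{any} positional split. This is also why the paper immediately remarks that the decomposition is not recoverable --- $\bP_1$ and $\bP_2$ discard the interleaving information and cannot reconstruct $\bP_{\hard}$ --- whereas your block-diagonal picture would make the decomposition trivially recoverable, a sign that it is not the construction the proposition intends. The same relative-order extraction is what fails for $\bP_{\soft}$ (its sub-blocks are not doubly-stochastic), which is the real content of the hardness assumption; in your route that contrast never appears.
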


\begin{proof}
    A split does not change the relative order of elements in the same split and each entry in the permutation matrix is zero or one,
    so that a permutation matrix can be split as shown in~\figref{fig:exp_splits}.
    Moreover, multiple splits are straightforwardly doable.
\end{proof}

In contrast to $\bP_{\hard}$, it is impossible to split $\bP_{\soft}$ to sub-block matrices since such sub-block matrices cannot satisfy the property of doubly-stochastic matrix,
which is discussed in~\propref{prop:doubly_stochastic}.
Importantly, \propref{prop:split} does not show a possibility of the recoverable decomposition of the permutation matrix,
which implies that we cannot guarantee the recovery of decomposed matrices to the original matrix.
Regardless of the existence of recoverable decomposition,
we attempt to reduce the number of possible permutations with sub-block matrices,
rather than holding the large number of possible permutations with the original permutation matrix.
Therefore,
by~\propref{prop:split}, 
relative relationships between instances with a smaller number of possible permutations are more distinctively learnable than the relationships with a larger number of possible permutations,
preventing a sparse correct permutation among a large number of possible permutations.

\section{Details of Architectures}
\label{sec:details_of_architectures}

We describe the details of the neural architectures used in our paper,
as shown in~Tables~\ref{tab:arch_cnn_mnist}, \ref{tab:arch_trans_small_mnist}, \ref{tab:arch_trans_large_mnist}, \ref{tab:arch_cnn_svhn}, \ref{tab:arch_trans_small_svhn}, \ref{tab:arch_trans_large_svhn}, \ref{tab:arch_cnn_fragments_mnist}, \ref{tab:arch_trans_fragments_mnist}, \ref{tab:arch_cnn_fragments_cifar10}, and~\ref{tab:arch_trans_fragments_cifar10}.
For the experiments on sorting image fragments,
we omit some of the architectures employed for particular fragmentation,
since they follow the same architectures presented in~Tables~\ref{tab:arch_cnn_fragments_mnist}, \ref{tab:arch_trans_fragments_mnist}, \ref{tab:arch_cnn_fragments_cifar10}, and~\ref{tab:arch_trans_fragments_cifar10}.
Only differences are the sizes of inputs, and therefore the respective sizes of the first fully-connected layers change.

\begin{table}[ht]
    \caption{Architecture of the convolutional neural networks for the four-digit MNIST dataset.}
    \label{tab:arch_cnn_mnist}
    \begin{center}
    \scriptsize
    \begin{tabular}{cccc}
        \toprule
        \textbf{Layer} & \textbf{Input \& Output (Channel) Dimensions} & \textbf{Kernel Size} & \textbf{Details} \\
        \midrule
        Convolutional & $1 \times 32$ & $5 \times 5$ & strides 1, padding 2 \\
        ReLU & -- & -- & -- \\
        Max-pooling & -- & -- & pooling 2, strides 2 \\
        Convolutional & $32 \times 64$ & $5 \times 5$ & strides 1, padding 2 \\
        ReLU & -- & -- & -- \\
        Max-pooling & -- & -- & pooling 2, strides 2 \\
        Fully-connected & $12544 \times 64$ & -- & -- \\
        ReLU & -- & -- & -- \\
        Fully-connected & $64 \times 1$ & -- & -- \\
        \bottomrule
    \end{tabular}
    \end{center}
\end{table}

\begin{table}[ht]
    \caption{Architecture of the Transformer-Small models for the four-digit MNIST dataset.}
    \label{tab:arch_trans_small_mnist}
    \begin{center}
    \scriptsize
    \begin{tabular}{cccc}
        \toprule
        \textbf{Layer} & \textbf{Input \& Output (Channel) Dimensions} & \textbf{Kernel Size} & \textbf{Details} \\
        \midrule
        Convolutional & $1 \times 32$ & $5 \times 5$ & strides 1, padding 2 \\
        ReLU & -- & -- & -- \\
        Max-pooling & -- & -- & pooling 2, strides 2 \\
        Convolutional & $32 \times 64$ & $5 \times 5$ & strides 1, padding 2 \\
        ReLU & -- & -- & -- \\
        Max-pooling & -- & -- & pooling 2, strides 2 \\
        Fully-connected & $12544 \times 16$ & -- & -- \\
        Transformer Encoder & $16 \times 16$ & & \#layers 6, \#heads 8 \\
        ReLU & -- & -- & -- \\
        Fully-connected & $16 \times 1$ & -- & -- \\
        \bottomrule
    \end{tabular}
    \end{center}
\end{table}

\begin{table}[ht]
    \caption{Architecture of the Transformer-Large models for the four-digit MNIST dataset.}
    \label{tab:arch_trans_large_mnist}
    \begin{center}
    \scriptsize
    \begin{tabular}{cccc}
        \toprule
        \textbf{Layer} & \textbf{Input \& Output (Channel) Dimensions} & \textbf{Kernel Size} & \textbf{Details} \\
        \midrule
        Convolutional & $1 \times 32$ & $5 \times 5$ & strides 1, padding 2 \\
        ReLU & -- & -- & -- \\
        Max-pooling & -- & -- & pooling 2, strides 2 \\
        Convolutional & $32 \times 64$ & $5 \times 5$ & strides 1, padding 2 \\
        ReLU & -- & -- & -- \\
        Max-pooling & -- & -- & pooling 2, strides 2 \\
        Fully-connected & $12544 \times 64$ & -- & -- \\
        Transformer Encoder & $64 \times 64$ & & \#layers 8, \#heads 8 \\
        ReLU & -- & -- & -- \\
        Fully-connected & $64 \times 1$ & -- & -- \\
        \bottomrule
    \end{tabular}
    \end{center}
\end{table}

\begin{table}[p]
    \caption{Architecture of the convolutional neural networks for the SVHN dataset.}
    \label{tab:arch_cnn_svhn}
    \begin{center}
    \scriptsize
    \begin{tabular}{cccc}
        \toprule
        \textbf{Layer} & \textbf{Input \& Output (Channel) Dimensions} & \textbf{Kernel Size} & \textbf{Details} \\
        \midrule
        Convolutional & $3 \times 32$ & $5 \times 5$ & strides 1, padding 2 \\
        ReLU & -- & -- & -- \\
        Max-pooling & -- & -- & pooling 2, strides 2 \\
        Convolutional & $32 \times 64$ & $5 \times 5$ & strides 1, padding 2 \\
        ReLU & -- & -- & -- \\
        Max-pooling & -- & -- & pooling 2, strides 2 \\
        Convolutional & $64 \times 128$ & $5 \times 5$ & strides 1, padding 2 \\
        ReLU & -- & -- & -- \\
        Max-pooling & -- & -- & pooling 2, strides 2 \\
        Convolutional & $128 \times 256$ & $5 \times 5$ & strides 1, padding 2 \\
        ReLU & -- & -- & -- \\
        Max-pooling & -- & -- & pooling 2, strides 2 \\
        Fully-connected & $2304 \times 64$ & -- & -- \\
        ReLU & -- & -- & -- \\
        Fully-connected & $64 \times 1$ & -- & -- \\
        \bottomrule
    \end{tabular}
    \end{center}
\end{table}

\begin{table}[ht]
    \caption{Architecture of the Transformer-Small models for the SVHN dataset.}
    \label{tab:arch_trans_small_svhn}
    \begin{center}
    \scriptsize
    \begin{tabular}{cccc}
        \toprule
        \textbf{Layer} & \textbf{Input \& Output (Channel) Dimensions} & \textbf{Kernel Size} & \textbf{Details} \\
        \midrule
        Convolutional & $3 \times 32$ & $5 \times 5$ & strides 1, padding 2 \\
        ReLU & -- & -- & -- \\
        Max-pooling & -- & -- & pooling 2, strides 2 \\
        Convolutional & $32 \times 64$ & $5 \times 5$ & strides 1, padding 2 \\
        ReLU & -- & -- & -- \\
        Max-pooling & -- & -- & pooling 2, strides 2 \\
        Convolutional & $64 \times 64$ & $5 \times 5$ & strides 1, padding 2 \\
        ReLU & -- & -- & -- \\
        Max-pooling & -- & -- & pooling 2, strides 2 \\
        Convolutional & $64 \times 128$ & $5 \times 5$ & strides 1, padding 2 \\
        ReLU & -- & -- & -- \\
        Max-pooling & -- & -- & pooling 2, strides 2 \\
        Fully-connected & $1152 \times 32$ & -- & -- \\
        Transformer Encoder & $32 \times 32$ & & \#layers 6, \#heads 8 \\
        ReLU & -- & -- & -- \\
        Fully-connected & $32 \times 1$ & -- & -- \\
        \bottomrule
    \end{tabular}
    \end{center}
\end{table}

\begin{table}[ht]
    \caption{Architecture of the Transformer-Large models for the SVHN dataset.}
    \label{tab:arch_trans_large_svhn}
    \begin{center}
    \scriptsize
    \begin{tabular}{cccc}
        \toprule
        \textbf{Layer} & \textbf{Input \& Output (Channel) Dimensions} & \textbf{Kernel Size} & \textbf{Details} \\
        \midrule
        Convolutional & $3 \times 32$ & $5 \times 5$ & strides 1, padding 2 \\
        ReLU & -- & -- & -- \\
        Max-pooling & -- & -- & pooling 2, strides 2 \\
        Convolutional & $32 \times 64$ & $5 \times 5$ & strides 1, padding 2 \\
        ReLU & -- & -- & -- \\
        Max-pooling & -- & -- & pooling 2, strides 2 \\
        Convolutional & $64 \times 128$ & $5 \times 5$ & strides 1, padding 2 \\
        ReLU & -- & -- & -- \\
        Max-pooling & -- & -- & pooling 2, strides 2 \\
        Convolutional & $128 \times 256$ & $5 \times 5$ & strides 1, padding 2 \\
        ReLU & -- & -- & -- \\
        Max-pooling & -- & -- & pooling 2, strides 2 \\
        Fully-connected & $2304 \times 64$ & -- & -- \\
        Transformer Encoder & $64 \times 64$ & & \#layers 8, \#heads 8 \\
        ReLU & -- & -- & -- \\
        Fully-connected & $64 \times 1$ & -- & -- \\
        \bottomrule
    \end{tabular}
    \end{center}
\end{table}

\begin{table}[ht]
    \caption{Architecture of the convolutional neural networks for the MNIST dataset of 4 image fragments of size $14 \times 14$.}
    \label{tab:arch_cnn_fragments_mnist}
    \begin{center}
    \scriptsize
    \begin{tabular}{cccc}
        \toprule
        \textbf{Layer} & \textbf{Input \& Output (Channel) Dimensions} & \textbf{Kernel Size} & \textbf{Details} \\
        \midrule
        Convolutional & $1 \times 32$ & $3 \times 3$ & strides 2, padding 1 \\
        ReLU & -- & -- & -- \\
        Convolutional & $32 \times 64$ & $3 \times 3$ & strides 2, padding 1 \\
        ReLU & -- & -- & -- \\
        Fully-connected & $1024 \times 64$ & -- & -- \\
        ReLU & -- & -- & -- \\
        Fully-connected & $64 \times 1$ & -- & -- \\
        \bottomrule
    \end{tabular}
    \end{center}
\end{table}

\begin{table}[ht]
    \caption{Architecture of the Transformer models for the MNIST dataset of 4 image fragments of size $14 \times 14$.}
    \label{tab:arch_trans_fragments_mnist}
    \begin{center}
    \scriptsize
    \begin{tabular}{cccc}
        \toprule
        \textbf{Layer} & \textbf{Input \& Output (Channel) Dimensions} & \textbf{Kernel Size} & \textbf{Details} \\
        \midrule
        Convolutional & $1 \times 32$ & $3 \times 3$ & strides 2, padding 1 \\
        ReLU & -- & -- & -- \\
        Convolutional & $32 \times 32$ & $3 \times 3$ & strides 2, padding 1 \\
        ReLU & -- & -- & -- \\
        Fully-connected & $512 \times 16$ & -- & -- \\
        Transformer Encoder & $16 \times 16$ & & \#layers 1, \#heads 8 \\
        ReLU & -- & -- & -- \\
        Fully-connected & $16 \times 1$ & -- & -- \\
        \bottomrule
    \end{tabular}
    \end{center}
\end{table}

\begin{table}[ht]
    \caption{Architecture of the convolutional neural networks for the CIFAR-10 dataset of 4 image fragments of size $16 \times 16$.}
    \label{tab:arch_cnn_fragments_cifar10}
    \begin{center}
    \scriptsize
    \begin{tabular}{cccc}
        \toprule
        \textbf{Layer} & \textbf{Input \& Output (Channel) Dimensions} & \textbf{Kernel Size} & \textbf{Details} \\
        \midrule
        Convolutional & $3 \times 32$ & $3 \times 3$ & strides 2, padding 1 \\
        ReLU & -- & -- & -- \\
        Convolutional & $32 \times 64$ & $3 \times 3$ & strides 2, padding 1 \\
        ReLU & -- & -- & -- \\
        Fully-connected & $1024 \times 64$ & -- & -- \\
        ReLU & -- & -- & -- \\
        Fully-connected & $64 \times 1$ & -- & -- \\
        \bottomrule
    \end{tabular}
    \end{center}
\end{table}

\begin{table}[ht]
    \caption{Architecture of the Transformer models for the CIFAR-10 dataset of 4 image fragments of size $16 \times 16$.}
    \label{tab:arch_trans_fragments_cifar10}
    \begin{center}
    \scriptsize
    \begin{tabular}{cccc}
        \toprule
        \textbf{Layer} & \textbf{Input \& Output (Channel) Dimensions} & \textbf{Kernel Size} & \textbf{Details} \\
        \midrule
        Convolutional & $3 \times 32$ & $3 \times 3$ & strides 2, padding 1 \\
        ReLU & -- & -- & -- \\
        Convolutional & $32 \times 32$ & $3 \times 3$ & strides 2, padding 1 \\
        ReLU & -- & -- & -- \\
        Fully-connected & $512 \times 16$ & -- & -- \\
        Transformer Encoder & $16 \times 16$ & & \#layers 1, \#heads 8 \\
        ReLU & -- & -- & -- \\
        Fully-connected & $16 \times 1$ & -- & -- \\
        \bottomrule
    \end{tabular}
    \end{center}
\end{table}

\section{Details of Experiments}
\label{sec:details_of_experiments}

As described in the main article, we use three public datasets:
MNIST~\citep{LeCunY1998mnist}, SVHN~\citep{NetzerY2011neuripsw}, and CIFAR-10~\citep{KrizhevskyA2009tr}.
Unless otherwise specified,
a learning rate $10^{-3.5}$ is used for the CNN architectures
and a learning rate $10^{-4}$ is used for the Transformer-based architectures;
see our implementation for the exact learning rates we utilize in the experiments.
Learning rate decay is applied by multiplying $0.5$ in every 50,000 steps for the experiments on sorting multi-digit images and every 20,000 steps for the experiments on sorting image fragments.
Moreover, we balance two objectives for $\bP_{\hard}$ and $\bP_{\soft}$ by multiplying $1$, $0.1$, $0.01$, or $0.001$; see our implementation for the respective values for all the experiments.
For random seeds, we pick five random seeds $42$, $84$, $126$, $168$, and $210$ for all the experiments;
these values are picked without any trials.
Other missing details can be found in our implementation.
Furthermore, we employ several commercial NVIDIA GPUs, i.e., GeForce GTX Titan Xp, GeForce RTX 2080, and GeForce RTX 3090, in the experiments.

\section{Study on Balancing Hyperparameter}
\label{sec:ablation_balancing}

\begin{table}[ht]
    \caption{Study on a balancing hyperparameter $\lambda$ in the experiments on sorting the four-digit MNIST dataset.}
    \label{tab:ablation_lambda}
    \begin{center}
    \scriptsize
    \setlength{\tabcolsep}{8pt}
    \begin{tabular}{lccccccc}
        \toprule
        \multirow{2}{*}{$\boldsymbol \lambda$} & \multicolumn{6}{c}{\textbf{Sequence Length}} \\
        & 3 & 5 & 7 & 9 & 15 & 32\\
        \midrule
        1.000 & 94.8 (96.4) & 86.9 (94.1) & 74.2 (90.9) & 62.6 (88.6) & 12.0 (69.4) & 0.0 (38.9) \\
        0.100 & 94.9 (96.5) & 87.2 (94.2) & 75.3 (91.3) & 64.7 (89.2) & 34.7 (83.3) & 2.1 (69.2) \\
        0.010 & 95.1 (96.6) & 87.1 (92.7) & 75.2 (91.2) & 63.5 (88.8) & 33.2 (82.7) & 0.7 (60.7) \\
        0.001 & 95.2 (96.7) & 87.0 (94.1) & 76.6 (91.6) & 64.8 (89.2) & 32.9 (82.7) & 0.5 (61.5) \\
        0.000 & 94.9 (96.5) & 87.2 (94.2) & 75.9 (91.5) & 64.4 (89.1) & 34.1 (83.2) & 0.9 (60.6) \\
        \bottomrule
    \end{tabular}
    \end{center}
\end{table}

We conduct a study on a balancing hyperparameter in the experiments on sorting the four-digit MNIST dataset,
as shown in~\tabref{tab:ablation_lambda}.
For these experiments, we use steepness 2, 14, 23, 38, 25, and 124 for sequence lengths 3, 5, 7, 9, 15, and 32, respectively.
Also, we use a learning rate $10^{-3}$ 
and 5 random seeds 42, 84, 126, 168, and 210.

\section{Study on Steepness and Learning Rate}
\label{sec:ablation_steepness_lr}

We present studies on steepness and learning rate for the experiments on sorting the multi-digit MNIST dataset,
as shown in~Tables~\ref{tab:ablation_steepness_seq_3}, \ref{tab:ablation_steepness_seq_5}, \ref{tab:ablation_steepness_seq_7}, \ref{tab:ablation_steepness_seq_9}, \ref{tab:ablation_steepness_seq_15}, and~\ref{tab:ablation_steepness_seq_32}.
For these experiments, a random seed 42 is only used due to numerous experimental settings.
Also, we use balancing hyperparameters $\lambda$ as 1.0, 1.0, 0.1, 0.1, 0.1, and 0.1 for sequence lengths 3, 5, 7, 9, 15, and 32, respectively.
Since there are many configurations of steepness, learning rate, and a balancing hyperparameter,
we cannot include all the configurations here.
The final configurations we use in the experiments are described in our implementation.
As widely known in the deep learning community,
a learning rate should be set as a value around 10$^{-3}$.
Moreover, according to our empirical analyses, steepness should generally be higher as a sequence length is longer.

\begin{table}[ht]
    \centering
    \caption{Study on steepness and learning rate for a sequence length 3. lr stands for learning rate.}
    \label{tab:ablation_steepness_seq_3}
    \begin{center}
    \scriptsize
    \setlength{\tabcolsep}{8pt}
    \begin{tabular}{lcccccccc}
        \toprule
        \multirow{2}{*}{${\boldsymbol \log}_{\textbf{10}}$ \textbf{lr}} & \multicolumn{7}{c}{\textbf{Steepness}} \\
        & 2 & 4 & 6 & 8 & 10 & 12 & 14\\
        \midrule
        -4.0 & 89.2 (92.6) & 91.5 (94.2) & 92.1 (94.6) & 92.3 (94.7) & 92.7 (95.0) & 92.1 (94.6) & 92.7 (95.0) \\
        -3.5 & 93.6 (95.5) & 93.4 (95.5) & 94.5 (96.2) & 93.9 (95.9) & 94.4 (96.1) & 94.5 (96.2) & 94.6 (96.3) \\
        -3.0 & 95.3 (96.8) & 95.1 (96.6) & 95.0 (96.5) & 94.4 (96.2) & 94.3 (96.1) & 94.7 (96.4) & 94.8 (96.5) \\
        -2.5 & 94.5 (96.2) & 94.3 (96.1) & 93.7 (95.6) & 93.8 (95.7) & 93.7 (95.7) & 93.7 (95.6) & 94.1 (95.9) \\
        \bottomrule
    \end{tabular}
    \end{center}
\end{table}

\begin{table}[ht]
    \caption{Study on steepness and learning rate for a sequence length 5. lr stands for learning rate.}
    \label{tab:ablation_steepness_seq_5}
    \begin{center}
    \scriptsize
    \setlength{\tabcolsep}{8pt}
    \begin{tabular}{lcccccccc}
        \toprule
        \multirow{2}{*}{${\boldsymbol \log}_{\textbf{10}}$ \textbf{lr}} & \multicolumn{7}{c}{\textbf{Steepness}} \\
        & 14 & 16 & 18 & 20 & 22 & 24 & 26\\
        \midrule
        -4.0 & 78.3 (90.2) & 76.3 (89.3) & 79.5 (90.9) & 78.7 (90.4) & 78.4 (90.4) & 77.3 (89.8) & 79.0 (90.6) \\
        -3.5 & 85.1 (93.3) & 83.6 (92.7) & 84.8 (93.1) & 85.5 (93.5) & 83.4 (92.5) & 85.6 (93.6) & 84.1 (92.8) \\
        -3.0 & 86.9 (94.1) & 85.2 (93.3) & 86.2 (93.8) & 85.7 (93.6) & 85.4 (93.5) & 85.0 (93.2) & 85.0 (93.3) \\
        -2.5 & 83.1 (92.3) & 83.2 (92.4) & 83.1 (92.4) & 81.7 (91.8) & 83.3 (92.5) & 82.4 (92.1) & 82.5 (92.1) \\
        \bottomrule
    \end{tabular}
    \end{center}
\end{table}

\begin{table}[ht]
    \caption{Study on steepness and learning rate for a sequence length 7. lr stands for learning rate.}
    \label{tab:ablation_steepness_seq_7}
    \begin{center}
    \scriptsize
    \setlength{\tabcolsep}{8pt}
    \begin{tabular}{lcccccccc}
        \toprule
        \multirow{2}{*}{${\boldsymbol \log}_{\textbf{10}}$ \textbf{lr}} & \multicolumn{7}{c}{\textbf{Steepness}} \\
        & 23 & 25 & 27 & 29 & 31 & 33 & 35\\
        \midrule
        -4.0 & 53.5 (82.9) & 57.3 (84.6) & 61.0 (86.0) & 58.3 (85.1) & 66.1 (88.0) & 57.3 (84.6) & 61.6 (86.4) \\
        -3.5 & 71.6 (90.0) & 73.5 (90.8) & 68.1 (88.6) & 72.9 (90.5) & 72.2 (90.2) & 71.4 (89.9) & 74.2 (91.0) \\
        -3.0 & 74.4 (90.9) & 72.8 (90.4) & 74.2 (90.8) & 69.2 (89.0) & 69.9 (89.3) & 73.0 (90.4) & 73.0 (90.5) \\
        -2.5 & 68.0 (88.6) & 67.2 (88.1) & 68.1 (88.6) & 64.5 (86.9) & 66.8 (88.0) & 70.4 (89.3) & 66.2 (87.9) \\
        \bottomrule
    \end{tabular}
    \end{center}
\end{table}

\begin{table}[ht]
    \caption{Study on steepness and learning rate for a sequence length 9. lr stands for learning rate.}
    \label{tab:ablation_steepness_seq_9}
    \begin{center}
    \scriptsize
    \setlength{\tabcolsep}{8pt}
    \begin{tabular}{lcccccccc}
        \toprule
        \multirow{2}{*}{${\boldsymbol \log}_{\textbf{10}}$ \textbf{lr}} & \multicolumn{7}{c}{\textbf{Steepness}} \\
        & 26 & 28 & 30 & 32 & 34 & 36 & 38\\
        \midrule
        -4.0 & 34.0 (77.7) & 37.8 (79.7) & 37.3 (79.4) & 45.9 (83.0) & 46.4 (83.1) & 36.2 (78.9) & 45.3 (82.5) \\
        -3.5 & 51.8 (84.8) & 59.7 (87.5) & 58.3 (87.5) & 61.0 (88.2) & 60.2 (88.1) & 54.6 (85.8) & 56.3 (86.6) \\
        -3.0 & 62.5 (88.8) & 60.5 (88.0) & 61.8 (88.4) & 63.2 (88.9) & 61.5 (88.2) & 62.2 (88.6) & 65.0 (89.5) \\
        -2.5 & 57.4 (86.8) & 58.7 (87.2) & 56.5 (86.4) & 55.3 (86.1) & 52.8 (85.0) & 46.9 (82.1) & 52.1 (84.6) \\
        \bottomrule
    \end{tabular}
    \end{center}
\end{table}

\begin{table}[ht]
    \caption{Study on steepness and learning rate for a sequence length 15. lr stands for learning rate.}
    \label{tab:ablation_steepness_seq_15}
    \begin{center}
    \scriptsize
    \setlength{\tabcolsep}{8pt}
    \begin{tabular}{lcccccccc}
        \toprule
        \multirow{2}{*}{${\boldsymbol \log}_{\textbf{10}}$ \textbf{lr}} & \multicolumn{7}{c}{\textbf{Steepness}} \\
        & 19 & 21 & 23 & 25 & 27 & 29 & 31\\
        \midrule
        -4.0 & 3.6 (62.7) & 7.2 (67.6) & 4.9 (64.8) & 2.6 (60.9) & 2.9 (61.5) & 2.5 (60.2) & 6.8 (68.3) \\
        -3.5 & 10.3 (71.5) & 11.9 (73.0) & 7.6 (68.6) & 22.3 (78.7) & 7.1 (68.7) & 8.2 (69.6) & 10.6 (71.8) \\
        -3.0 & 24.2 (79.8) & 24.1 (80.1) & 29.7 (81.9) & 32.1 (82.7) & 23.9 (79.9) & 31.6 (82.1) & 31.0 (82.0) \\
        -2.5 & 30.5 (81.4) & 28.2 (80.0) & 18.9 (76.8) & 29.8 (80.9) & 19.8 (77.8) & 15.4 (75.4) & 24.6 (79.6) \\
        \bottomrule
    \end{tabular}
    \end{center}
\end{table}

\begin{table}[ht]
    \caption{Study on steepness and learning rate for a sequence length 32. lr stands for learning rate.}
    \label{tab:ablation_steepness_seq_32}
    \begin{center}
    \scriptsize
    \setlength{\tabcolsep}{8pt}
    \begin{tabular}{lcccccccc}
        \toprule
        \multirow{2}{*}{${\boldsymbol \log}_{\textbf{10}}$ \textbf{lr}} & \multicolumn{7}{c}{\textbf{Steepness}} \\
        & 118 & 120 & 122 & 124 & 126 & 128 & 130\\
        \midrule
        -4.0 & 0.0 (46.4) & 0.0 (45.8) & 0.0 (46.4) & 0.0 (43.1) & 0.0 (42.9) & 0.0 (49.4) & 0.0 (48.9) \\
        -3.5 & 0.2 (60.5) & 0.3 (61.8) & 0.7 (64.8) & 0.3 (62.7) & 0.0 (56.0) & 0.3 (62.7) & 0.2 (59.8) \\
        -3.0 & 0.6 (63.1) & 0.5 (63.6) & 0.4 (59.8) & 0.8 (65.8) & 0.2 (58.1) & 0.1 (56.0) & 0.1 (57.1) \\
        -2.5 & 0.5 (62.8) & 0.5 (62.3) & 0.5 (62.5) & 0.1 (58.3) & 0.3 (61.2) & 0.1 (59.5) & 0.2 (58.1) \\
        \bottomrule
    \end{tabular}
    \end{center}
\end{table}

\section{Limitations}

While a sorting task is one of the most significant problems in computer science and mathematics~\citep{CormenTH2022book},
our ideas, which are built on sorting networks~\citep{KnuthDE1998book,AjtaiM1983stoc},
can be limited to sorting algorithms.
It implies that it is not easy to devise neural network-based approaches
to solving general problems in computer science, e.g., combinatorial optimization,
which are inspired by our ideas.

In addition,
while our proposed methods show the superior performance compared to the baseline methods,
this line of research suffers from performance degradation for longer sequences as shown in~\tabssref{tab:mnist}{tab:svhn}{tab:fragments}.
More precisely,
for longer sequences,
the element-wise accuracy does not decline dramatically,
but the sequence-wise accuracy significantly drops due to the nature of sequences.
Incorporating our contributions such as the error-free DSFs and the Transformer-based networks,
we expect that the further progress of neural network-based sorting networks can be achieved.
In particular,
the consideration of more sophisticated neural networks,
which are capable of handling longer sequences, might help improve performance.
This will be left for future work.

Our frameworks successfully learn relationships between high-dimensional data with ordinal contents as shown in~\secref{sec:experiments}.
However,
we suppose that our methods might fail in sorting data without ordinal information;
the elaborate discussion on this topic can be found in~\secref{sec:discussion}.
In order to sort more ambiguous high-dimensional data,
we can combine our work with part-based or segmentation-based approaches.

\clearpage

\section{Additional Discussion}

It is challenging to directly sort a sequence of generic data instances without using auxiliary networks and explicit supervision.
Unlike earlier sorting methods,
this sorting network-based research~\citep{PetersenF2021icml,PetersenF2022iclr} including our work ensures that we can train a neural network that predicts numerical scores and eventually sorts them,
even though we do not necessitate accessing explicit supervision such as exact numerical values of the contents in high-dimensional data.
In this sense,
the practical significance of our proposed methods can be highlighted by
offering this possibility of solving a sorting problem with high-dimensional inputs.
For example,
as shown in~\secref{sec:experiments},
we can compare images of street view house numbers using the sorting network
where our neural network is trained without exact house numbers.

Moreover,
instead of using costly supervision,
our networks allow us to sort high-dimensional instances in a sequence where information on comparisons between instances is only given.
This scenario often occurs when we cannot obtain complete supervision.
For example,
if we would sort four-digit MNIST images,
ordinary neural networks are designed to solve a classification task by predicting class probabilities each of which indicates one of all labels from ``0000'' to ``9999''.
If some labels are missing and further we do not know the exact number of labels,
they might fail in predicting unseen data corresponding to those labels.
Unlike these methods,
it is possible to solve sorting problems using our networks in such a scenario.

Furthermore, this study can be applied in diverse deep learning tasks for learning to sort generic high-dimensional data,
such as information retrieval~\citep{CaoZ2007icml,LiuTY2009ftir} and top-$k$ classification~\citep{BerradaL2018iclr}.

\end{document}